	\newcommand*{\myfont}{\fontfamily{phv}\selectfont}
	\DeclareTextFontCommand{\ggplotFont}{\myfont}
\theoremstyle{definition}
\newcommand{\E}{\mathbb{E}}
\newcommand{\cD}{\mathcal{D}}
\newcommand{\cI}{\mathcal{I}}
\theoremstyle{remark}
\newtheorem{remark}{Remark}
\DeclareMathOperator*{\sign}{sign}
\DeclareMathOperator*{\argmax}{arg\,max}
\DeclareMathOperator*{\argsup}{arg\,sup}
\DeclareMathOperator*{\median}{median}
\newtheorem{theorem}{Theorem}[section]
\newtheorem{lemma}[theorem]{Lemma}
\begin{document}
	

	
\title{Probabilistic Bisection with Spatial Metamodels}
	
\author{Sergio Rodriguez
and Michael Ludkovski
}
%
%
%
\date{Department of Statistics and Applied Probability, \\ University of California, Santa Barbara, Santa Barbara, CA, USA 93106 \\ \url{{srodriguez,ludkovski}@pstat.ucsb.edu} }

	

\maketitle

\begin{abstract}
	Probabilistic Bisection Algorithm performs root finding based on knowledge acquired from noisy oracle responses. We consider the generalized PBA setting (G-PBA) where the statistical distribution of the oracle is \textit{unknown} and location-dependent, so that model inference and Bayesian knowledge updating must be performed simultaneously. To this end, we propose to leverage the spatial structure of a typical oracle by constructing a statistical surrogate for the underlying logistic regression step. We investigate several non-parametric surrogates, including Binomial Gaussian Processes (B-GP), Polynomial, Kernel, and Spline Logistic Regression. In parallel, we develop sampling policies that adaptively balance learning the oracle distribution and learning the root. One of our proposals mimics active learning with B-GPs and provides a novel look-ahead predictive variance formula.
The resulting gains of our Spatial PBA algorithm relative to earlier G-PBA models are illustrated with synthetic examples and a challenging stochastic root finding problem from Bermudan option pricing. 	
\end{abstract}
	
	Keywords:	Stochastic Root-Finding, Simulation metamodeling,  Uncertainty Quantification.
	

	\section{Introduction}
	\label{sec:PBA}
	
The Probabilistic Bisection Algorithm (PBA) is a numeric estimation procedure for learning an unknown parameter $x^{*}$ (defined on a bounded search space, without loss of generality $[0,1]$) based on the information provided by noisy responses observed independently at sampling/querying sites $x_{1:n}:=(x_{1},\ldots,x_{n})$. In the context of the Stochastic Root Finding Problem (SRFP)~\citep{pasupathy2011stochastic,waeber2011bayesian}, the PBA can be used to learn the root, $x^{*}:=h^{-1}(0)$, of a noisily observed real-valued function $h:[0,1]\rightarrow \mathbb{R}$. Specifically we consider an oracle of the form
\begin{equation}
\label{eq:pba-oracle}
Y(x_{n}) := \sign Z(x_{n}),
\end{equation}
where the structural form of the (random) responses $Z(x_{n})$ in~\eqref{eq:pba-oracle} is given by
\begin{equation}
\label{eq:z-oracle}
Z(x_{n}):= h(x_{n}) + \epsilon(x_{n}).
\end{equation}
The noise component $\epsilon(x_{n})$ in~\eqref{eq:z-oracle} is assumed to be a \textit{symmetric} heteroscedastic (i.e.,~input-dependent) random term with mean $\mathbb{E}[\epsilon(x_{n})] = 0$ and variance $\mathbb{V}ar(\epsilon(x_{n})) := \sigma^{2}(x)$, with independent realizations across different oracle calls.

 The PBA leverages the classical \textit{bisection search} strategy in a noise-free setting: repeatedly halve the search region and then select a subinterval in which a root must lie for further processing. The stochastic PBA accounts for the noise in the oracle responses by considering $x^{*}$ as the realization of an absolutely continuous random variable $X^{*}\sim g_{0}$ with prior density $g_{0}$ supported on $[0,1]$. The PBA then works with the {sign} of the noisy function evaluations~\eqref{eq:pba-oracle}, which provide information as to whether $x^*$ lies to the left or to the right of a given $x_{n}$, in order to subsequently update a posterior density for $X^{*}$,
\begin{equation}
\label{eq:g_n}
{g_{n}(X^{*}):=p(X^{*}|Y_{1:n},x_{1:n}).}
\end{equation}
Thus, $g_n$ is the pdf of the root location $X^{*}$ conditional on the history $Y_{1:n}:=(Y_{1}(x_{1}),\ldots,Y_{n}(x_{n}))$ of oracle responses, the sampling locations $x_{1:n}$ and the prior $g_{0}$. The posterior~\eqref{eq:g_n} then serves for the twin purposes of guiding the election of the next sampling location $x_{n+1}$ at which to query~\eqref{eq:pba-oracle}, as well as to provide a point estimator $\hat{x}_{n}$ for $X^{*}$ (e.g., the posterior median or mean of $g_{n}(\cdot)$).

Due to the noise term $\epsilon(x_{n})$ in the simulation outputs $Z(x_{n})$ in \eqref{eq:pba-oracle}, the responses $Y(x_{n}) =\sign Z(x_{n})$ translate into potentially inaccurate oracle directions. To account for such ``mistakes'',  the PBA considers the probability of \textit{correct sign},
\begin{equation}
\label{eq:p-correct1}
p(x_{n}) := \mathbb{P} \bigl(Y(x_{n}) = \sign\{x_{n}-x^{*}\}\bigr), 
\end{equation}
henceforth referred to as oracle {specificity} or \textit{accuracy}, which is then used to update knowledge about $X^{*}$ by re-weighting the current $g_{n}$ proportionally to $p(x_{n})$. \cite{waeber2013bisection} provided an explicit recursive updating formula under the restrictive condition that the oracle accuracy is a known constant $p(x)=p^{*}>1/2$ for all $x\in (0,1)$. This assumption of {spatial oracle stationarity} would tend to be met in applications where the transition between regions in $h$ is abrupt. As an example, if a city's water supply were contaminated with a dangerous chemical we would want to localize the extent of contamination as quickly as possible, and if the chemical did not dissolve well in water but instead tended to stay concentrated, we would face a situation with such abrupt transition between contaminated and uncontaminated water~\citep{powell2012optimal}.

However, in the more general and practical case, including the SRFP in~\eqref{eq:z-oracle}, $p(x)$ is unknown and location-dependent and hence must be itself \textit{estimated}. The Generalized PBA (G-PBA) that we developed in~\cite{rodriguez2017generalized} extends the classical PBA by using the observed data to construct a point estimate, $\hat{p}(x)$, for $p(x)$, as well as to learn the root location $X^{*}$ in parallel. The proposed estimators $\hat{p}(x_{n+1})$ under the aforementioned G-PBA paradigm were  constructed \textit{locally} at $x_{n+1}$ (i.e., without using information from previous locations $x_{1:n}$). As such, they were robust to arbitrary specification of $p(\cdot)$ and made minimal assumptions about the oracle.

\textbf{Surrogate modeling. }In this article we construct a \textit{spatial} G-PBA by modeling the entire oracle accuracy $x \mapsto p(x)$ using a \emph{surrogate}.

The surrogate relies on two main premises:
(i) Due to
	symmetrical noise distribution of the functional responses~\eqref{eq:z-oracle}, the oracle accuracy~\eqref{eq:p-correct1} can be re-formulated as $p(x){=}\max\{\theta(x),1-\theta(x)\}$, where
	\begin{equation}
	\label{eq:ProbPositiveResponse}
	\theta(x) := \mathbb{E} \left[1_{\{Z(x)>0\}} \right]
	\end{equation}
	is the probability of observing a \textit{positive} oracle response. Thus, inference on $p(x)$ can be performed by inferring $\theta(x)$ first and then plugging in a spatial-based estimate $\hat{\theta}_{n}(x)$ into $p(x){=}\max\{\hat{\theta}_{n}(x),1-\hat{\theta}_{n}(x)\}$, and (ii) the smoothness of the map $x\mapsto \theta(x)$, implies that $p(x)$ and $p(x')$ should be similar when $x$ and $x'$ are deemed close to each other.

The spatial structure is natural in the root-finding context and provides two key benefits. On the one hand, it improves estimation of a given $p(x_n)$ through leveraging the knowledge acquired at previous sampling locations $x_{1:n-1}$. On the other hand, it enables better sampling strategies by furnishing a prediction $\hat{p}(x)$ at arbitrary, unsampled sites $x$. In contrast, in G-PBA, $\hat{p}(x_{n+1})$ was only available \emph{a posteriori} after sampling at $x_{n+1}$.

The resulting Spatial G-PBA strategy blends the root-centric framework of PBA and the function-centric paradigm of response surface modeling (RSM). Indeed, a further alternative for solving the SRFP would be to learn the entire $\theta(\cdot)$ and then take $\hat{x} = \hat{\theta}^{-1}(0.5)$ since $h(x^*) = 0 \Leftrightarrow \theta(x^*) = 0.5$. Thus, stochastic root-finding can be recast as a (localized) learning task, namely contour-finding for $\theta(\cdot)$ at the level $0.5$. Strategies similar to Bayesian optimization \citep{JonesSchonlauWelch98,ChevalierPicheny13} can then be employed to efficiently target this objective during sequential design.   Nevertheless, several challenges are encountered with such an approach that are circumvented in PBA. First, a major feature of PBA is full uncertainty quantification: the algorithm provides not only the point estimate $\hat{x}$ but also the entire posterior distribution $f_n$ of $X^*$ conditional on the data. Typical RSM models return only point estimates (or pointwise credible intervals) of $\theta(x)$; the latter are difficult to ``invert'' into uncertainty about $\theta^{-1}(0.5)$~\citep{AzzimontiBect16}. Second, existing experimental design approaches for contour-finding are developed only for simple models (e.g.~with zero or constant observation noise), and their performance in a complex stochastic setting like ours is poorly understood. In contrast, the PBA framework explicitly targets the goal of reducing uncertainty about $X^*$. PBA moreover exploits the structural knowledge of a \emph{unique} root to speed up estimation, an option that is not available in contour-finding. Third, contour-finding usually assumes continuous response, and nontrivial modifications (essentially ``logistic'' contour-finding) are necessary to handle binary $Y_n \in \{-1, 1\}$. In contrast, PBA is intrinsically  designed for binomial responses.

Given the above discussion, we construct a hybrid algorithm that borrows the best of both worlds. We exploit the smoothness of $h$ that implies spatial dependence in $\theta(\cdot)$ and hence accelerates learning the oracle. At the same time, we employ the paradigm of PBA to construct the knowledge state $f_n$ (a pseudo-posterior of $X^*$) that is the primary driver of sampling decisions and uncertainty quantification. For the RSM component, we rely on two key concepts. First, we investigate non-parametric architectures that have the flexibility to consistently learn the entire response $x \mapsto  \theta(x)$ and to handle non-uniform simulation designs. The latter point is key as we wish to organically refine the surrogate in regions where more inputs are placed (namely close to the root), but at the same time give a good global fit. To handle the binary responses \eqref{eq:pba-oracle} we employ \textit{logistic regression} which represents the probability of observing a positive response $\theta(x)= \mathbb{E}[1_{Z_{n}(x)>0}]$ via a \textit{latent} process $\varphi(x) := \mbox{logit}(\theta(x))$. Other link functions can also be used but as we show in the sections below, the canonical Bernoulli logit link is best suited for our needs.
For capturing the spatial surrogate $\varphi$, we consider Gaussian Process (GP) models, as well as spline, kernel and polynomial logistic regressions. Second, we apply \textit{batched sampling} that significantly lowers the computational overhead of surrogate construction and improves the learning of $\theta(\cdot)$. Replicated experimental designs allow to blend the local inference of $\theta(x_n)$ with the global fitting of the surrogate. They also offer a new aspect of sequential design, namely adaptive replication, linking to the active learning literature in Bayesian optimization.

\textbf{Summary of Contributions and Related Literature. }Our contributions can be traced along two directions. First, the developed G-PBA algorithm extends existing probabilistic bisection schemes in \cite{jedynak2012twenty,waeber2011bayesian,waeber2013bisection,waeber2013probabilistic,frazier2016probabilistic}, in particular making them much more efficient even when the oracle distribution is a priori unknown. Thus, we contribute to the stochastic root-finding toolkit. Second, our work has independent interest in terms of applications of binomial GP  (B-GP) surrogates. To this end, we provide an original result for the look-ahead variance formula of a binomial GP, that to our knowledge is not available in existing literature. This provides a new application of B-GPs in the context of active learning, linking to related work in \cite{kapoor2007active,tesch2013expensive,wang2016optimization}.

In the extensive numerical section, we demonstrate that by introducing a spatial surrogate we are able to improve PBA's accuracy in the root estimation. By using three different synthetic examples, we show that absolute residuals decay faster using spatial surrogates than their corresponding (local) G-PBA methods. Additionally, the posterior uncertainty of the root estimate, as measured by the width of the posterior credible interval (CI), is reduced {and, most importantly, the probability coverage (i.e., the proportion of macro-runs where the CI contains the actual root value) drastically increases with respect to their local counterparts}.

The rest of the paper is organized as follows. In Section~\ref{sec:SpatialEstimation} we describe the model methodology used to provide a spatial estimate for $p(\cdot)$. Section~\ref{sec:AdaptiveSampling} then describes an adaptive batching/replication scheme in order to determine the number of replicates $a_{n+1}$ given an estimated surrogate model. Section~\ref{sec:sampling} develops the surrogate-based sampling schemes for the SRFP. In Sections~\ref{sec:NumericExamples} and~\ref{sec:Results-AmericanOption} we illustrate the developed Spatial G-PBA with several synthetic examples and a challenging real-world application coming from an Optimal Stopping problem.

\section{Spatial Modeling of the Oracle}\label{sec:SpatialEstimation}
PBA works in the sequential setting, adaptively picking query sites $x_{n+1}$ given information from previous queries. The latter is summarized via a knowledge state $f_n$ which captures the Bayesian formulation of the SRFP, translating the task of learning the root
$X^{*}$ into quantifying the corresponding posterior uncertainty.  At each iteration $n=1,\ldots$, the oracle is called $a_{n}\geq 1$ times at a fixed sampling location $x_{n}$  with the responses {$(Z_{1}(x_{n}),\ldots,Z_{a_{n}}(x_{n}))$} aggregated  via the total number of positive signs observed at $x_{n}$:
\begin{equation}
\label{eq:PositiveResponses-Bn}
B_{n}(x_{n}) := \sum_{j=1}^{a_{n}}1_{\{Z_{j}(x_{n})>0\}}.
\end{equation}
The overall information set by round $n$ is therefore $\cD_n := (B_{1:n},a_{1:n},x_{1:n})$. We shall distinguish between the macro counter $n$ that keeps track of PBA iterations, i.e.~the number of distinct sites $x_{1:n}$, and the wall clock $T_{n}:=\sum_{i=1}^{n}a_{i}$ that counts total number of function evaluation and hence the overall computational expense. Occasionally, we abuse the notation, switching between writing $f_n$ and $f_{T_n}$.

Given the current knowledge state $f_{n}$ and a total simulation budget of $T>0$ wall-clock iterations,  the fundamental G-PBA loop is:
\begin{algorithm}
	\SetKwInOut{Input}{input}\SetKwInOut{Output}{output}
	Initialize $T_{0}$ and $\hat{p}_0$\;
	\While{$T_{n}<T$}{
  	Choose $x_{n+1}$ based on $f_{n}$ and $\hat{p}_n(\cdot)$\; {Call the oracle $a_{n+1}$ times at $x_{n+1}$ and record $B_{n+1}$ as in \eqref{eq:PositiveResponses-Bn}}\;
  	 Use $(a_{n+1}, B_{n+1})$ to update $f_{n+1} \leftarrow \Psi( f_{n}, B_{n+1}; \hat{p}(x_{n+1}),a_{n+1})$ and re-fit $\hat{p}_{n+1}(\cdot)$\;
	Update wall-clock time: $T_{n+1}\leftarrow T_{n} + a_{n+1}$ and increment $n \leftarrow n+1$ \;
	}
	\nllabel{alg:updating-BasicGPALoop}
	\Return Knowledge state  $f_{n}\simeq g_{n}$ and estimator for the root location $\hat{x}_{n}$.
	\caption{G-PBA iterations}\label{alg:BasicGPBALoop}
\end{algorithm}

In the general case of unknown and varying oracle specificity, the key ingredients of Algorithm~\ref{alg:BasicGPBALoop} are:

\begin{enumerate}[label=(GPBA-\Roman*),align=left]
	\item \label{prop:pba1} statistical learning sub-routine for $\hat{p}(\cdot)$.
	\item \label{prop:pba2} the mechanism to update knowledge states $\Psi: f_{n} \rightarrow f_{n+1}$;
	\item \label{prop:pba3} the sampling rule $\eta$ for selecting $x_{n+1} =\eta(f_{n}; \hat{p}_n)$ given $f_{n}$ and $\hat{p}_n$.
\end{enumerate}

\textbf{Learning sub-routine for $\hat{p}(\cdot)$. }For estimating the oracle specificity, G-PBA relies on the aggregated number $B_n$ of positive signs observed at $x_{n}$ across $a_n$ oracle queries.  Replicates decouple the problems of learning $X^{*}$ and of learning $p(\cdot)$; they also boost the signal-to-noise ratio which allows \textit{faster} convergence at the macro-level. The original G-PBA did this locally, returning an  estimate $\hat{p}(x_{n})$ depending solely on $(x_n, a_n, B_n)$. In this paper we extend~\ref{prop:pba1} by introducing a {surrogate} model $x\mapsto \varphi(x)$ on~\eqref{eq:ProbPositiveResponse} which is built upon the \emph{history} of binomial responses $B_{1:n} := (B_{1}(x_{1}),\ldots,B_{n}(x_{n}))$ observed at all queried $x_{1:n}$. In particular, we have that $B_{n}(x) \sim \mathsf{Bin}(a_{n},\theta(x))$ is a binomial random variable which is statistically \textit{sufficient} and unbiased for $\theta(x)$. To learn $\theta(\cdot)$ we therefore regress $B_{1:n}$ against the locations $x_{1:n}$, linking each $x_{i}$ to $\theta(x_{i})$ via the canonical Bernoulli link function:
	\begin{equation}
	\label{eq:SurrogateModel0}
	\log\left(\frac{\theta(x_{i})}{1-\theta(x_{i})}\right) = \varphi(x_{i}),\quad i=1,\ldots,n.
	\end{equation}
We consider two families for $\varphi(\cdot)$: (A)  Gaussian random field approach \citep{rasmussen2006gaussian} that takes $\varphi$ as a latent Gaussian process (GP) and outputs the posterior distribution $p(\varphi_*|\cD_n)$; (B) a linear additive model that assumes that $\varphi$  is an element of a linear space $\mathcal{H}$ spanned by a collection of basis functions, i.e.,  $\varphi(x) = \sum_{j=1}^{p}\beta_{j}\phi_{j}(x)$, with the coefficients $\bm{\beta} := (\beta_{1},\ldots,\beta_{p})$ fitted, for example, by penalized MLE.

Given the fitted surrogate $\hat{\varphi}_{n}$, the estimate for $p(\cdot)$ is a plug-in estimate {of the form}:
\begin{equation}
\label{eq:Estimated-pn}
\hat{p}_{n}(x):= \max\{\hat{\theta}_{n}(x),1-\hat{\theta}_{n}(x)\}; \ \mbox{where}\qquad \hat{\theta}_{n}(x) \equiv \Theta( \hat{\varphi}_n(x)) := [1+e^{-\hat{\varphi}_{n}(x)}]^{-1}.
\end{equation}

\textbf{Updating knowledge states. }

The knowledge state $f_n$ is intended to capture all available information about $X^*$ given $\cD_n$. Since the true Bayesian posterior $g_{n}$ is not attainable due to unknown $p(\cdot)$, we notationally distinguish between the \textit{approximate} knowledge state $f_n$ and the true $g_n$~\eqref{eq:g_n}. For assimilating information,  we mimic the exact Bayesian updating from \cite{waeber2011bayesian} and use the batched knowledge state transition introduced in~\cite{rodriguez2017generalized}. Thus we take $f_{n+1} = \Psi(f_{n},x_{n+1},B_{n+1};\hat{p}_{n+1},a_{n+1})$ with
	\begin{equation}
	\label{eq:batched_updating_pba}
 \Psi(f_{n},x_{n+1},B_{n+1}; p ,a)(u) {\propto}\left\{
	\begin{array}{l}
	\left[ p(x_{n+1})^{ B_{n+1}} (1-p(x_{n+1}))^{a-B_{n+1}} \right]f_{n}(u),\ x_{n+1}<u\\
	 \\
	 \left[(1-p(x_{n+1}))^{ B_{n+1}} p(x_{n+1})^{a-B_{n+1}} \right]f_{n}(u),\  x_{n+1}\geq u.
	\end{array}
	\right.
	\end{equation}
Note that we replace the unknown $p(x_{n+1})$ with the surrogate-based $\hat{p}_{n+1}(x_{n+1})$. Over multiple rounds, this implies that $f_n$ depends on the historical estimates $\hat{\varphi}_{1:n}(x_{1:n})$ introducing a complex path-dependency between the latest knowledge state and the past surrogates of $p(\cdot)$.
	
	\textbf{Sampling strategies. } A sampling policy $\eta$ is a rule  which maps knowledge states to actions, namely sampling decisions.  The sampling decision to be made at step $(n+1)$ concerns the new query site $x_{n+1}$ and the respective number of replicates $a_{n+1}$. We consider two complementary ideas: (i) first select $a_{n+1}$ and then $x_{n+1}$; (ii) choose $x_{n+1}$ and then determine the respective $a_{n+1}$.
	
Approach (i) utilizes fixed replication amount $a\ge 1$ and selects the new $x_{n+1}$ using an information-theoretic criterion. In analogy to the Information Directed Sampling (IDS) policy used in the G-PBA context~\citep{rodriguez2017generalized}, we consider a criterion based on the \textit{batched} expected Kullback-Leibler (KL)  divergence $\E[ D(f_{n+1};f_n)]$ between $f_{n} $ and the updated knowledge state $f_{n+1}=\Psi(f_{n},x,B_{n+1};\hat{p}_{n+1},a)$,
\begin{equation}
		\label{eq:BatchedInformationCriterion0}
		\cI(x,f_{n};\hat{p}_{n}(x),a) := \mathbb{E}^{B(x)}_{\hat{p}}\left[\int_{0}^{1} \log_{2}\left(\frac{f_{n}(u|B(x),a)}{f_{n}(u)}\right)f_{n}(u)du\right];
\end{equation}
where the expectation is taken with respect to the random variable $B(x)\sim \mathsf{Bin}(a,\hat{p}_{n}(x))$ {and $\hat{p}_{n}(\cdot)$ is recovered using~\eqref{eq:Estimated-pn}}. Given the acquisition function~\eqref{eq:BatchedInformationCriterion0}, the next sampling location is its greedy maximizer
	\[
	x_{n+1}^{\mbox{\tiny sIDS}}:= \argsup_{x \in (0,1)} \cI(x,f_{n};\hat{p}_{n}(x),a).
	\]

The IDS rule was shown to be optimal for the base case of known and constant $p(x)$ and $a=1$~\citep{jedynak2012twenty}. In that case it is known to correspond to selecting $x_{n+1}$ which maximizes the conditional mutual information between the oracle response $Y_{n+1}(x_{n+1})$ and $X^{*}$ given $f_{n}$. Approach (ii), dubbed Randomized Quantile Sampling (RQS), selects locations using the knowledge state $f_{n}$ as a proposal density, i.e., $x_{n+1}^{\mbox{\tiny RQS}} \sim f_{n}(\cdot)$ and then adaptively picks  $a_{n+1}$.
The RQS strategy resembles Thompson sampling~\citep{russo2014information} and was shown to be competitive with IDS (and frequently slightly better) in the earlier G-PBA context. Conditional on $x_{n+1}$, $a_{n+1}$ is then picked to control the surrogate accuracy at $x_{n+1}$ in order to ensure the right amount of learning.

\textbf{Estimating the root $X^{*}$. }The final ingredient is the rule $\hat{x}_{n}$ to construct a point estimate of the root $x^*$ based on $f_n$. In analogy to the classical PBA setting \citep{waeber2013bisection}, we utilize the posterior median which we find is generally more robust than say the mean, as $f_n$ is often skewed or multi-modal,
\begin{equation}
\label{eq:median_fn}
\hat{x}_{n} := \mbox{median}(f_{n}).
\end{equation}	

\subsection{Binomial Gaussian Process Regression}
	\label{sec:BinomialGPs}
GPs can conveniently be used to specify prior distributions for Bayesian inference in the regression context and are widely adopted for sequential design tasks. In G-PBA they facilitate managing the sample budget for calling~\eqref{eq:z-oracle} by quantifying the predictive uncertainty at the next sampling site $x_{n+1}$ in terms of the  number of replicates $a_{n+1}$~\citep{kaminski2015method,binois2017replication}. The related Binomial Gaussian processes (B-GPs) (also known as GP classification and originally introduced in \cite{williams1998bayesian}) arise naturally in the context of \textit{latent variable} regression for \eqref{eq:ProbPositiveResponse}. In this case, the $\varphi$ is seen as a realization of a random process whose finite dimensional distribution follows a Multivariate Normal (MVN) distribution and whose spatial dependency is described by a (stationary) covariance function.

While for plain regression with Gaussian noise inference can be done in closed form, since for a given election of covariance kernel the posterior corresponds also to a GP~\citep{rasmussen2006gaussian}, this is no longer the case for B-GPs. Indeed,  since the binomial data likelihood is not conjugate to the Gaussian prior, exact inference is analytically intractable and therefore approximations to the predictive posterior must be conducted. One route summarized in~\cite{nickisch2008approximations} is based on approximating the non-Gaussian posterior with a tractable Gaussian distribution. Some of the most common instances of such schemes are the Laplace Approximation (LA)~\citep{williams1998bayesian} and Expectation
Propagation~(EP)~\citep{minka2001expectation}.

Let us assume that the surrogate $\varphi$ in~\eqref{eq:SurrogateModel0} is drawn from a GP prior, $\varphi \sim \mathsf{GP}(0,\kappa_{\vartheta}(\cdot, \cdot))$, characterized by a \textit{covariance kernel} function $\kappa_\vartheta(\cdot,\cdot)$ and parameterized by a vector of \textit{hyperparameters} $\vartheta \equiv (\tau^2, l)$. One of the most commonly used kernels is the Mat\'ern-$5/2$ family,
\begin{equation}
\label{eq:matern52}
\kappa_\vartheta(x_{i},x_{j}):= \tau^{2}\left[1+ \sqrt{5}r/l + 5r^{2}/(3l^{2})\right]e^{-\sqrt{5}r/l} \quad r:=|x_{i} - x_{j}|;
\end{equation}
where $\tau^{2}\ge 0$ is the intrinsic GP variance, and $l>0$ is the length-scale, which governs how fast the correlation decreases as the distance $|x_{i} - x_{j}|$ between inputs increases.

\textbf{Binomial GPs as latent variable models. } For fixed hyper-parameter $\vartheta$, the joint distribution of the vector $\varphi_{1:n}:= (\varphi_{1}(x_{1}),\ldots,\varphi_{n}(x_{n}))$ is a MVN
\begin{equation}
\label{eq:JointPrior-Z}
\varphi_{1:n} \sim \mathsf{N}(\bm{0},\bm{K}_{n}),
\end{equation}
where $\mathbb{E}[\varphi_{1:n}|x_{1:n}] = \bm{0}$ is the mean vector and $\bm{K}_{n}\equiv Cov(\varphi_{1:n}|x_{1:n})$ is the covariance matrix with entries $\kappa_{\vartheta}(x_{i},x_{j})=Cov(\varphi_{i},\varphi_{j}|x_{i,j})$. Inference of $\theta(\cdot)$ in \eqref{eq:ProbPositiveResponse} is conducted in two stages. First, we compute the posterior distribution of the vector $\varphi_{1:n}$ given the training data {$\mathcal{D}_{n}:=(B_{1:n},a_{1:n})$, consisting of the history of binomial $B_{i}$ responses and number of queries $a_{i}$ at each location $x_{i}$},
\begin{equation}
\label{eq:JointPosterior-Z}
p(\varphi_{1:n}|\cD_n) \propto p(B_{1:n}|\varphi_{1:n},a_{1:n})p(\varphi_{1:n});
\end{equation}
which is proportional to the binomial data likelihood $p(B_{1:n}|\varphi_{1:n},a_{1:n})$ times the MVN prior $p(\varphi_{1:n})$ given by~\eqref{eq:JointPrior-Z}. Second, the posterior predictive distribution $\varphi_{*}\equiv \varphi_{*}(x)$ at a location $x\in (0,1)$ is
\begin{equation}
\label{eq:PredictivePosterior-Z}
p(\varphi_{*}|\cD_n):= \int p(\varphi_{*},\tilde{\varphi}_{1:n}|\cD_n,x) d\tilde{\varphi}_{1:n},
\end{equation}
which is calculated by marginalizing the distribution of $\varphi_{*}$ over the joint posterior distribution of $(\varphi_{1:n},\varphi_{*})$ given by \eqref{eq:JointPosterior-Z}. Finally, the predicted $\hat{\theta}^{GP}(x)$ is produced by averaging the inverse link function with respect to \eqref{eq:PredictivePosterior-Z}; i.e.,  $\hat{\theta}_{n}^{GP}(x) :=\int (1+e^{-\varphi_{*}})^{-1}\cdot p(\varphi_{*}|\cD_n) d\varphi_{*}$.

\begin{remark}
	Following the classical inference paradigm for binomial regression we assume that $\theta(\varphi(x_{i}))$ is related to the random variable $\varphi(x_{i})$ via the canonical \textit{logistic} link function~\eqref{eq:SurrogateModel0}. Although other link functions can be entertained (such as the probit link), we use the logistic one since this link is used to obtain closed-form expressions for adaptive replication (see Lemma~\ref{lemma:CanonicalLink} in Section~\ref{sec:AdaptiveSampling}).
\end{remark}

The main challenge in computing the joint posterior \eqref{eq:JointPosterior-Z} is that the MVN prior over $\varphi_{1:n}$ does not correspond to a conjugate prior for the binomial likelihood, so either analytic approximations of integrals or solutions based on MCMC sampling are required. A commonly used method is to approximate the non-Gaussian posterior $p(\varphi_{1:n}|\cD_n)$ with a Gaussian one via Laplace Approximation.

\textbf{Laplace Approximation. }The Laplace method is constructed from the second order Taylor expansion of the \textit{score function}, ${\cal L}(\varphi_{1:n}) := \log p(\varphi_{1:n}|\cD_n)$, around its mode:
\[
\hat{\bm{\varphi}}_{n} = \argmax_{\bm{\varphi}_{n}} p(\bm{\varphi}_{n}|\cD_n).
\]

In Appendix~\ref{appendix:LaplaceApproximation} we show that this method yields a MVN approximation:
\begin{equation}
\label{eq:ApproxJointPosterior-Z}
p(\cdot| \cD_n) \simeq q(\cdot| \cD_n,\hat{\bm{\varphi}}_{n}) = \mathsf{N}(\cdot;\hat{\bm{\varphi}}_{n},(\bm{K}_{n}^{-1} + \hat{\bm{W}}_{n})^{-1}),
\end{equation}
where
\begin{equation}
\label{eq:EstimatedPosteriorMode}
\hat{\bm{\varphi}}_{n}:=(\hat{\varphi}_{1;n},\ldots,\hat{\varphi}_{n;n})
\end{equation} is found numerically via Newton-Raphson iterations using the training data $\mathcal{D}_{n}$, and $\hat{\bm{W}}_{n}$ is the $n \times n$ Fisher Information matrix of the binomial (negative) log-likelihood $l(\varphi_{1:n}):=\log p(B_{1:n}|a_{1:n},\varphi_{1:n})$. Importantly, if the canonical link is used, then the $i$-th entry of $\hat{\bm{W}}_{n}$ corresponds to the variance of the binomial response $B_{i}$ at $x_{i}$:
\begin{lemma}
	\label{lemma:CanonicalLink}
	Under the Bernoulli link function~\eqref{eq:SurrogateModel0}, the Hessian $\hat{\bm{W}}_{n}(\varphi_{1:n})=-\Delta l(\varphi_{1:n})$
	is diagonal:
	\begin{equation}
	\label{eq:HessianBinomialLikelihood}
	w_{ij} = \left\{
	\begin{array}{ll}
	a_{i}\Theta(\varphi_{i;n})(1-\Theta(\varphi_{i;n})), &i=j, \\
	0 & i\neq j, \quad \mbox{for $i,j = 1,\ldots,n$.}
	\end{array}\right.
	\end{equation}
\end{lemma}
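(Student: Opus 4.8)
The plan is to exploit the conditional independence of the binomial responses: given the latent values $\varphi_{1:n}$, the $B_i$ are independent, so the log-likelihood $l(\varphi_{1:n}) = \log p(B_{1:n}\mid a_{1:n},\varphi_{1:n})$ decomposes additively as $\sum_{i=1}^n l_i(\varphi_i)$, with each summand depending on the single coordinate $\varphi_i$. This additive separability is exactly what forces the mixed partials $\partial^2 l/\partial\varphi_i\partial\varphi_j$ to vanish for $i\neq j$, giving the off-diagonal claim $w_{ij}=0$ for free. The substance of the lemma therefore lies entirely in identifying the diagonal entries, and the role of the canonical link is to render those entries clean and data-independent.

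For the diagonal I would first write each summand explicitly. Using the binomial mass function and substituting the logit link $\varphi_i = \logit(\theta(x_i))$, the term $B_i\log\theta(x_i) + (a_i-B_i)\log(1-\theta(x_i))$ collapses, after absorbing the $\varphi$-independent binomial coefficient into a constant, to $l_i(\varphi_i) = B_i\varphi_i - a_i\log(1+e^{\varphi_i}) + \mathrm{const}$. The key simplification is that the canonical link makes $\varphi_i$ itself the natural parameter, so the likelihood is linear in the data and $B_i$ enters only through that linear term.

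Next I would differentiate. One differentiation gives the score $\partial l_i/\partial\varphi_i = B_i - a_i\Theta(\varphi_i)$, using $e^{\varphi_i}/(1+e^{\varphi_i})=\Theta(\varphi_i)$. Differentiating a second time, and invoking the elementary logistic identity $\Theta'(\varphi)=\Theta(\varphi)(1-\Theta(\varphi))$, yields $\partial^2 l_i/\partial\varphi_i^2 = -a_i\Theta(\varphi_i)(1-\Theta(\varphi_i))$. Crucially the $B_i$ term has dropped out: because the link is canonical, the observed Hessian no longer depends on the data. Negating to form $\hat{\bm{W}}_n = -\Delta l$ then gives exactly $w_{ii} = a_i\Theta(\varphi_{i;n})(1-\Theta(\varphi_{i;n}))$, which one recognizes as $\mathbb{V}ar(B_i) = a_i\theta(x_i)(1-\theta(x_i))$ under $B_i\sim\mathsf{Bin}(a_i,\theta(x_i))$, completing the identification asserted in the statement.

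The computation is elementary, so there is no genuine analytic obstacle; the only point needing care is the bookkeeping confirming that the residual $B_i - a_i\Theta(\varphi_i)$ appears only at first order and is annihilated by the second derivative. This cancellation is precisely the classical coincidence of observed and expected Fisher information for canonical GLM links, and it is what makes the result usable as a closed-form variance in the adaptive-replication scheme of Section~\ref{sec:AdaptiveSampling}.
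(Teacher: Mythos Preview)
Your proposal is correct and follows essentially the same route as the paper: write the binomial log-likelihood as a sum of terms $l_i(\varphi_i)$ each depending on a single coordinate, differentiate once to obtain the score $B_i - a_i\Theta(\varphi_i)$, and differentiate again via $\Theta'=\Theta(1-\Theta)$ to get the diagonal entries. The paper's appendix computes the gradient through the intermediate form $B_i\Theta'/\Theta-(a_i-B_i)\Theta'/(1-\Theta)$ rather than your exponential-family expression $B_i\varphi_i-a_i\log(1+e^{\varphi_i})$, but this is a cosmetic difference and the argument is otherwise identical.
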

 Hence, we have that $\hat{\bm{W}}_{n} = \mbox{diag}(\hat{w}_{1;n},\ldots,\hat{w}_{n;n})$; where $\hat{w}_{i;n}:=a_{i}\Theta(\hat{\varphi}_{i;n})(1-\Theta(\hat{\varphi}_{i;n}))$ are evaluated at the posterior mode~\eqref{eq:EstimatedPosteriorMode}. Having found the joint~\eqref{eq:ApproxJointPosterior-Z}, the (approximated) predictive posterior density $\varphi_{*} \sim \mathsf{N}(m_n(x), s_n^2(x))$ is also Gaussian  with mean $m_{n}(x)\equiv m_{n}(x;\hat{\bm{\varphi}}_{n})$ and posterior variance $s_{n}^{2}(x)\equiv s_{n}^{2}(x;\hat{\bm{\varphi}}_{n})$:
\begin{subequations}\label{eq:PosteriorPredictive}
	\begin{align}
	\label{eq:PosteriorPredictiveMean-n}
	m_{n}(x) &:= \bm{K}_{n}^{T} \bm{K}_{n}^{-1} \hat{\bm{\varphi}}_{n}; \\
	\label{eq:PosteriorPredictiveVariance-n}
	s_{n}^{2}(x) &:=  \bm{\kappa}_{n}^{T} (\bm{K}_{n} + \hat{\bm{W}}_{n}^{-1})^{-1}\bm{\kappa}_{n},
	\end{align}
\end{subequations}
where $\bm{\kappa}_{n}:= (\kappa(x,x_{1}),\ldots,\kappa(x,x_{n}))^{T}$ is the $n\times 1$ vector of covariances between $\varphi_{*}$ and $\varphi_{1:n}$.
The resulting point estimate for $\theta(x)$ {given $\mathcal{D}_{n}$} is thus
\begin{equation}
\label{eq:theta-hat}
\hat{\theta}^{GP}_{n}(x):=\int_{\mathbb{R}} (1+e^{-\varphi_{*}})^{-1} \mathsf{N}(\varphi_{*};m_{n}(x),s_{n}^{2}(x)) d\varphi_{*}, \quad x\in (0,1).
\end{equation}
Numerically, $\hat{\theta}^{GP}_n(x)$ is obtained by approximating the integral in~\eqref{eq:theta-hat} via a quadrature method. In particular we use \texttt{integrate()} which is part of the core distribution of \texttt{R} and relies on the Gauss-Kronrod quadrature method~\citep{RSoftware}.

\textbf{Hyper-parameter estimation. } The above model specification is valid for fixed hyperparameters $\vartheta$. To optimize the latter, we  consider a maximum a posteriori estimate (MAP),  $\hat{\vartheta}:= \argmax_{\vartheta} \{\log q(\cD_n|\vartheta)+  \log q_{0}(\vartheta)\}$ based on a prior $q_0(\cdot)$. In order to obtain $\hat{\vartheta}$ we use the package \texttt{GPstuff}~\citep{vanhatalo2013gpstuff}, which uses interleaved numerical optimization: at iteration $m$ given $\hat{\vartheta}^{(m)}$, evaluate the covariance matrix $\bm{K}_n(\hat{\vartheta}^{(m)})=(\kappa_{\hat{\vartheta}^{(m)}}(x_{i},x_{j}))_{i,j=1}^{n}$ and so estimate the mode $\hat{\bm{\varphi}}_{n}^{(m)}$;  then fix $\hat{\bm{\varphi}}^{(m)}_{n}$ and find $\hat{\vartheta}^{(m+1)} = \arg \max_\vartheta \log q(\cD_n|\vartheta,\hat{\bm{\varphi}}^{(m)}_{n})+ \log q_{0}(\vartheta)$, where  $q(\cD_n|\vartheta,\hat{\bm{\varphi}}^{(m)}_{n})$ is the data marginal log-likelihood,
\[
\log q(\cD_n|\vartheta,\hat{\bm{\varphi}}_{n}) = -\frac{1}{2} \hat{\bm{\varphi}}_{n}^{T}\bm{K}_{n}(\vartheta)^{-1}\hat{\bm{\varphi}}_{n} + \log p(B_{1:n}|a_{1:n},\hat{\bm{\varphi}}_{n}) -\frac{1}{2}\log \{|\bm{K}_{n}(\vartheta)|\cdot|\bm{K}_{n}(\vartheta)^{-1} + \hat{\bm{W}}_{n}(\hat{\bm{\varphi}}_{n})|\},
\]
which is available in closed-form, {see Algorithm 5.1 in~\cite{rasmussen2006gaussian}}.

\subsection{Adaptive Batching using the Posterior GP Variance}
\label{sec:AdaptiveSampling}
The posterior variance $s_n(\cdot)$ of the surrogate quantifies the quality of learning the latent GP. It can be used to guide sampling decisions via the associated information gain regarding $\varphi(\cdot)$. This is achieved by considering the look-ahead $s_{n+1}(\cdot)$ conditional on sampling at $x_{n+1}$. For plain GPs, $s_{n+1}$ is independent of the future response and hence can be evaluated exactly.
Unfortunately, for binomial GPs the look-ahead predictive variance \emph{does} depend on the future $B_{n+1}(x_{n+1})$. Specifically, Equation~\eqref{eq:PosteriorPredictiveVariance-nplus1} expresses the fact that $s_{n+1}^{2}(x_{n+1})$ depends on the entire $\hat{\bm{\varphi}}_{n+1}$ (computed based on $\mathcal{D}_{n+1}$).

\begin{theorem}
	\label{thm:PosteriorPredictiveVariance}
	The look-ahead variance $s_{n+1}^{2}(x_{n+1})$ at a new location $x_{n+1}$ under the Laplace approximation \eqref{eq:PosteriorPredictive} is given by
	\begin{align}
	\label{eq:PosteriorPredictiveVariance-nplus1}
	s_{n+1}^{2}(x_{n+1}) &= \left(
	\frac{1}{s_{n}^{2}(x_{n+1};\hat{\bm{\varphi}}_{1:n,n+1})} + \frac{1}{a_{n+1}\cdot \Theta(\hat{\varphi}_{n+1,n+1})(1-\Theta(\hat{\varphi}_{n+1,n+1}))}
	\right)^{-1} \\
\label{eq:PosteriorPredictiveVariance-nplus2}
& \simeq \left(
\frac{1}{s_{n}^{2}(x_{n+1})} + \frac{1}{a_{n+1}\hat{\theta}^{GP}_{n}(x_{n+1})(1-\hat{\theta}^{GP}_{n}(x_{n+1}))}
\right)^{-1},
	\end{align}
where $s_{n}^{2}(x_{n+1})$ is the iteration-$n$ posterior variance from~\eqref{eq:PosteriorPredictiveVariance-n} and  $\hat{\theta}^{GP}_{n}(x_{n+1})$ is from~\eqref{eq:theta-hat}.

\end{theorem}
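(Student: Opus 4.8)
The plan is to read off the look-ahead variance as the marginal posterior variance of the single latent coordinate $\varphi_{n+1}=\varphi(x_{n+1})$ under the Laplace posterior built from the augmented data $\cD_{n+1}$. By \eqref{eq:ApproxJointPosterior-Z} this posterior is Gaussian with precision matrix $\bm{K}_{n+1}^{-1}+\hat{\bm{W}}_{n+1}$, so $s_{n+1}^{2}(x_{n+1})$ is the $(n+1,n+1)$ entry of $(\bm{K}_{n+1}^{-1}+\hat{\bm{W}}_{n+1})^{-1}$. The crucial structural input is Lemma~\ref{lemma:CanonicalLink}: under the canonical link $\hat{\bm{W}}_{n+1}$ is diagonal, so the new site contributes only a single scalar $\hat{w}_{n+1,n+1}=a_{n+1}\Theta(\hat{\varphi}_{n+1,n+1})(1-\Theta(\hat{\varphi}_{n+1,n+1}))$ in the last diagonal slot and couples with the existing block purely through the prior precision $\bm{K}_{n+1}^{-1}$. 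I would therefore write $\hat{\bm{W}}_{n+1}=\hat{\bm{W}}_{n}^{(n+1)}+\hat{w}_{n+1,n+1}\,\bm{e}_{n+1}\bm{e}_{n+1}^{T}$, where $\hat{\bm{W}}_{n}^{(n+1)}:=\mathrm{diag}(\hat{w}_{1,n+1},\dots,\hat{w}_{n,n+1},0)$ collects the first $n$ Hessian entries re-evaluated at the augmented-data mode $\hat{\bm{\varphi}}_{1:n,n+1}$.

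Next I would identify the ``pre-update'' marginal variance. With precision $\bm{K}_{n+1}^{-1}+\hat{\bm{W}}_{n}^{(n+1)}$ the likelihood acts only on $\varphi_{1:n}$, so marginalizing out $\varphi_{1:n}$ is exactly the standard GP predictive step (block-matrix inversion of $\bm{K}_{n+1}$, cf.\ Rasmussen--Williams): the marginal variance of $\varphi_{n+1}$ equals the iteration-$n$ predictive variance \eqref{eq:PosteriorPredictiveVariance-n} evaluated with the $\hat{\bm{W}}_{n}$-entries taken at $\hat{\bm{\varphi}}_{1:n,n+1}$, i.e.\ $s_{n}^{2}(x_{n+1};\hat{\bm{\varphi}}_{1:n,n+1})$. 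Adding back the single term $\hat{w}_{n+1,n+1}\bm{e}_{n+1}\bm{e}_{n+1}^{T}$ is a rank-one perturbation of the precision localized in the $(n+1,n+1)$ coordinate, so Sherman--Morrison gives for that coordinate the clean precision-addition rule $1/s_{n+1}^{2}(x_{n+1})=1/s_{n}^{2}(x_{n+1};\hat{\bm{\varphi}}_{1:n,n+1})+\hat{w}_{n+1,n+1}$, which is exactly \eqref{eq:PosteriorPredictiveVariance-nplus1}. Equivalently one can obtain the same identity by a direct Schur-complement expansion of $\bm{\kappa}_{n+1}^{T}(\bm{K}_{n+1}+\hat{\bm{W}}_{n+1}^{-1})^{-1}\bm{\kappa}_{n+1}$, recognizing $s_{n}^{2}(x_{n+1})$ as the relevant Schur complement; the rank-one route is shorter.

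Finally I would pass to the computable approximation \eqref{eq:PosteriorPredictiveVariance-nplus2}. Two quantities in the exact formula are not available before the oracle is queried at $x_{n+1}$: the augmented mode $\hat{\bm{\varphi}}_{1:n,n+1}$ and the new-coordinate mode $\hat{\varphi}_{n+1,n+1}$, both of which depend on the yet-unseen batch $B_{n+1}$. I would argue that a single additional batch perturbs the mode of the first $n$ coordinates only slightly, so $s_{n}^{2}(x_{n+1};\hat{\bm{\varphi}}_{1:n,n+1})\approx s_{n}^{2}(x_{n+1})$, and that the best pre-sampling surrogate for $\Theta(\hat{\varphi}_{n+1,n+1})$ is the current predictive mean $\hat{\theta}_{n}^{GP}(x_{n+1})$ from \eqref{eq:theta-hat}; substituting both yields \eqref{eq:PosteriorPredictiveVariance-nplus2}.

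I expect the main obstacle to be precisely this mode dependence: unlike a plain Gaussian-noise GP---where the look-ahead variance is a deterministic function of the design and the future response drops out---the Laplace Hessian $\hat{\bm{W}}_{n+1}$ is evaluated at a mode that itself depends on the unobserved $B_{n+1}$. Making the first equality rigorous therefore requires carefully fixing the mode at its augmented-data value on both sides of the rank-one update (this is what the notation $s_{n}^{2}(\,\cdot\,;\hat{\bm{\varphi}}_{1:n,n+1})$ encodes), and the passage from \eqref{eq:PosteriorPredictiveVariance-nplus1} to \eqref{eq:PosteriorPredictiveVariance-nplus2} is genuinely an approximation rather than an identity. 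Controlling the quality of that approximation---e.g.\ bounding the mode shift induced by one batch---is the only non-routine part; the remaining linear algebra is standard.
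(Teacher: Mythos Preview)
Your proposal is correct and reaches the same identity as the paper, but via a different parameterization. The paper works in \emph{covariance} form: it partitions $\bm{\Sigma}_{n+1}=\bm{K}_{n+1}+\hat{\bm{W}}_{n+1}^{-1}$, applies block matrix inversion to obtain $\bm{\Sigma}_{n+1}^{-1}$ explicitly, substitutes into the predictive-variance formula $s_{n+1}^{2}(x_{n+1})=\tau^{2}-\bm{u}^{T}\bm{\Sigma}_{n+1}^{-1}\bm{u}$ with $\bm{u}=(\bm{\kappa}_{n}^{*},\tau^{2})^{T}$, and then simplifies the resulting expression algebraically until the harmonic-mean form emerges. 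You instead work in \emph{precision} form: you recognize $s_{n+1}^{2}(x_{n+1})$ directly as the $(n{+}1,n{+}1)$ entry of the Laplace posterior covariance $(\bm{K}_{n+1}^{-1}+\hat{\bm{W}}_{n+1})^{-1}$, split off the single diagonal contribution $\hat{w}_{n+1,n+1}\bm{e}_{n+1}\bm{e}_{n+1}^{T}$, and apply Sherman--Morrison to get the precision-addition rule in one step. Your route is shorter and makes the structure of \eqref{eq:PosteriorPredictiveVariance-nplus1} transparent from the outset (precisions add because the likelihood is diagonal), whereas the paper's route stays closer to the standard kriging-variance formula \eqref{eq:PosteriorPredictiveVariance-n} and so connects more visibly to the iteration-$n$ expression. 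You in fact anticipate the paper's approach when you mention the Schur-complement alternative. For the passage to \eqref{eq:PosteriorPredictiveVariance-nplus2} both you and the paper make the same two substitutions, $s_{n}^{2}(x_{n+1};\hat{\bm{\varphi}}_{1:n,n+1})\approx s_{n}^{2}(x_{n+1})$ and $\Theta(\hat{\varphi}_{n+1,n+1})\approx \hat{\theta}_{n}^{GP}(x_{n+1})$, and neither attempts a quantitative bound on the mode shift; your caveat about this being the only non-routine step is accurate.
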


The approximation in \eqref{eq:PosteriorPredictiveVariance-nplus2} aims to remove the dependence of \eqref{eq:PosteriorPredictiveVariance-nplus1}  on $B_{n+1}$ by using only information available at iteration $n$. To do so, we approximate the denominator of the first term in~\eqref{eq:PosteriorPredictiveVariance-nplus1} via
$s_{n}^{2}(x_{n+1};\hat{\bm{\varphi}}_{n})\simeq s_{n}^{2}(x_{n+1};\hat{\bm{\varphi}}_{1:n;n+1}),
$
that is, using the estimated posterior mode at time $n$. Similarly, the future {local} binomial variance in the second term of~\eqref{eq:PosteriorPredictiveVariance-nplus1} is approximated by its iteration-$n$ counterpart $a_{n+1}\hat{\theta}^{GP}_{n}(x_{n+1})(1-\hat{\theta}^{GP}_{n}(x_{n+1}))$; see the full proof in the Appendix.

The look-ahead  variance forms the basis of numerous \emph{expected improvement} (EI)  design heuristics that quantify the gain from sampling at $x_{n+1}$, see e.g.~\cite{JonesSchonlauWelch98,ChevalierPicheny13}. Below we adapt these concepts to the setting of {binomial} GPs by  quantifying the approximate reduction in posterior variance of $\varphi(x_{n+1})$ due to sampling $a_{n+1}$ replicates at  $x_{n+1}$ and hence allowing optimization of $a_{n+1}$ conditional on $x_{n+1}$. Related batched EI criteria have recently appeared in \cite{kaminski2015method}; see also \cite{binois2016practical}.

The idea of adaptive replication is to aim for driving the iteration-$n+1$ variance $s_{n+1}^{2}(x_{n+1}) \le \nu_n$ below a threshold  $\nu_{n}$.
Using the variance decomposition formula in the RHS of~\eqref{eq:PosteriorPredictiveVariance-nplus2} and solving for $a^\nu_{n+1}$ we have that:
\[
a_{n+1}^{\nu} \ge \frac{1}{\hat{\theta}_{n}(x_{n+1})(1-\hat{\theta}_{n}(x_{n+1}))}\cdot \left(\frac{1}{\nu_{n}} - \frac{1}{s_{n}^{2}(x_{n+1})}\right).
\]
We therefore consider the following adaptive replication scheme:
\begin{equation}
\label{eq:GP-AdaptiveBatching}
\hat{a}_{n+1}^{\nu} := a_{0}^{\nu}\cdot 1_{\{s_{n}^{2}(x_{n+1}) < \nu_{n}\}} +\frac{1}{\hat{\theta}_{n}(x_{n+1})(1-\hat{\theta}_{n}(x_{n+1}))}\left(\frac{1}{\nu_{n}}-\frac{1}{s_{n}^{2}(x_{n+1})}\right)\cdot 1_{\{s_{n}^{2}(x_{n+1}) \geq  \nu_{n}\}}.
\end{equation}

\begin{remark}
	We focus on the predictive uncertainty in the latent process $\varphi$ as a measure to determine $a_{n+1}$ ---as opposed to the predictive variance of the random variable $\theta(\varphi(x_{n+1}))$. Focusing on the uncertainty of the latent GP is a common strategy in sequential design (especially when the data likelihood is Gaussian), see for example~\cite{ankennman:nelson:staum:2010,chen2017sequential}. Another common measure for constructing sequential designs is the posterior predictive entropy~\citep{kapoor2007active} which is the preferred uncertainty measure in the active learning framework.
\end{remark}

\subsection{MLE-Based Binomial Regression}

An alternative approach to B-GPs is to fit a linear surrogate of the form $\varphi(x):= \bm{\beta}^{T}\bm{\phi}(x)$ for a given set of \textit{basis functions}. Thus we seek the best fit in the function space $\mathcal{H} = \text{span}( \phi_j : j = 1,\ldots, p)$.  The coefficients $\bm{\beta} \in \mathbb{R}^{p}$ can be found by optimizing the \textit{penalized} binomial log-likelihood criterion
\begin{equation}
\label{eq:PenalizedNonParametricRegression}
\min_{\bm{\beta}} \sum_{i=1}^{n} \Bigl\{ B_{i}\sum_{j=1}^p \beta_j \phi_j(x_{i}) + a_{i}\log \left(1+\exp \Bigl(\sum_{j=1}^p \beta_j \phi_j(x_{i}) \Bigr)\right)\Bigr\} + \frac{1}{2}\lambda \mathcal{J}( \sum_{j=1}^p \beta_j \phi_j),
\end{equation}
where $\mathcal{J}(\varphi)$ is a penalty functional. The above specification includes the classical \textit{logistic regression model} when the basis elements in $\mathcal{H}$ are monomials and $\lambda = 0$, which we also implemented with AIC-based selection of the degree of the polynomial.

\textbf{Kernel Logistic Regression (KLR). } Another  choice is the family of positive definite kernel functions $\phi_{j}(\cdot):=\kappa_{l_{j}}(\cdot;\xi_{j})$, where each basis element $\kappa_{l_{j}}(\cdot,\xi_{j})$ is indexed by a location parameter $\xi_{j}$ and a scale parameter $l_{j}$. The corresponding space of functions $\mathcal{H}$ is a Reproducing Kernel Hilbert Space  with penalty functional $\mathcal{J}(\varphi) = ||\varphi||_{2}^{2}= \bm{\beta}^{T} \bm{\Phi} \bm{\beta}$, where $\bm{\Phi}_{ij}  = \phi_{j}(x_{i})$. A popular choice is the  \textit{Gaussian radial kernel}:
\begin{equation}
\label{eq:klr-gaussian}
\kappa_{l}(x; \xi):= \exp\left(- \frac{|x - \xi |^{2}}{ l^2} \right).
\end{equation}
KLR behaves similarly to Support Vector Machines: data inputs are mapped to a space spanned by positive definite kernel functions, and the loss function being optimized are also similar~\citep{zhu2005kernel}. For our purposes, it is natural to use $\xi_i = x_i$, i.e.~a separate kernel function for each query location.

\textbf{Spline Logistic Regression (SLR). }A further commonly used functional space $\mathcal{H}$ is the B-spline basis where the $\phi_{j}$'s are piecewise continuous functions defined in terms of a set of \textit{knots}. Namely, an order-$P$ spline with knots $(\xi_{j})_{j=1}^p$ is a piecewise-polynomial of order $P$, and has continuous derivatives up to order $P-2$. The B-spline family takes $P=4$ and can be represented in terms of $p>0$ basis functions: $\phi_{1}(x) = 1$, $\phi_{2}(x) = x$, and for $j=2,\ldots, p$, $\phi_{j+1}(x) = d_{j}(x) - d_{j-1}(x)$, where
\begin{equation}
\label{eq:splines-logisticRegression}
d_{j}(x):= \frac{(x-\xi_{j})_{+}^{3} - (x-\xi_{p})^{3}_{+}}{\xi_{p}-\xi_{j}}, \quad j = 2,\ldots,p.
\end{equation}
The basis coefficients  $\bm{\beta}$ are fitted by penalizing the curvature of $\varphi(\cdot)$ using $\mathcal{J}(\varphi) = ||\varphi^{\prime\prime}||_{2}^{2}$.

\section{Sampling Policies}\label{sec:sampling}

To make sampling decisions based on the surrogate $\varphi$ and the information about $X^*$ contained in $f_{n}$ we investigate three types of policies.

\textbf{Batched Information-Directed Sampling. } Our first approach utilizes fixed replication $a\ge 1$ and selects the next $x_{n+1}$ to maximize the estimated {batched} expected KL divergence between the knowledge state at $T_n$ and $T_n+a$ as in~\eqref{eq:BatchedInformationCriterion0}, available in closed-form according to Theorem~\ref{thm:BatchedInformation}.

\begin{theorem}
	\label{thm:BatchedInformation}
Let $x\in(0,1)$ and $f_{n}$ be the current knowledge state with cumulative distribution function (CDF) $F_n(\cdot)$. The expected KL divergence, $\E[ D(f_{n+1};f_n)]$ between $f_{n+1}$ and $f_{n}$ from $a$ queries at $x$ is given by
	\begin{subequations}
		\label{eq:BatchedInformationCriterion1}
		\begin{align}
		\E[ D(f_{n+1};f_n)] & = \mathbb{E}\left[\log_{2}\left(\frac{(1-p(x))^{B}p(x)^{a-B}}{c_{n}(x,B)}\right)\right]F_{n}(x) \label{eq:BatchedInformationCriterion2} \\
		& \ + \label{eq:BatchedInformationCriterion3} \mathbb{E}\left[\log_{2}\left(\frac{p(x)^{\tilde{B}}(1-p(x))^{a-\tilde{B}}}{c_{n}(x,\tilde{B})}\right)\right](1-F_{n}(x)).
		\end{align}
	\end{subequations}
where the expected values~\eqref{eq:BatchedInformationCriterion2} and \eqref{eq:BatchedInformationCriterion3} are taken with respect to $B\sim \mathsf{Bin}(a,1{-}\theta(x))$ and $\tilde{B}\sim \mathsf{Bin}(a,\theta(x))$, respectively.
\end{theorem}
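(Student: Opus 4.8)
The plan is to expand the divergence $D(f_{n+1};f_n)=\int_0^1 f_{n+1}(u)\log_2\!\bigl(f_{n+1}(u)/f_n(u)\bigr)\,du$ by substituting the batched update \eqref{eq:batched_updating_pba}, to exploit the fact that the likelihood reweighting is constant on each side of the query point $x$, and finally to average over the predictive law of the count $B$.

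First I would write $f_{n+1}(u)=\ell(u,B)\,f_n(u)/c_n(x,B)$, where the reweighting factor is $\ell(u,B)=(1-p(x))^{B}p(x)^{a-B}$ for $u\le x$ and $\ell(u,B)=p(x)^{B}(1-p(x))^{a-B}$ for $u>x$, and the normalizing constant, obtained by integrating $\ell(\cdot,B)f_n$ over $[0,1]$ and splitting the integral at $x$, is
\begin{equation*}
c_n(x,B)=(1-p(x))^{B}p(x)^{a-B}F_n(x)+p(x)^{B}(1-p(x))^{a-B}\bigl(1-F_n(x)\bigr).
\end{equation*}
The key structural observation is that $\log_2\!\bigl(f_{n+1}(u)/f_n(u)\bigr)=\log_2\!\bigl(\ell(u,B)/c_n(x,B)\bigr)$ is piecewise constant in $u$, taking one value on $\{u\le x\}$ and another on $\{u>x\}$. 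Hence the spatial integral collapses onto the two CDF masses $F_n(x)$ and $1-F_n(x)$ and produces exactly the two bracketed logarithmic terms of \eqref{eq:BatchedInformationCriterion1}, still indexed by the not-yet-observed $B$.

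Second I would take the expectation over the future observation. Under $f_n$ the root lies in $(0,x]$ with probability $F_n(x)$ and in $(x,1)$ with probability $1-F_n(x)$; conditional on these events the positive-sign count is binomial with rates $1-\theta(x)$ and $\theta(x)$ respectively (matching the convention of \eqref{eq:batched_updating_pba}), so marginally $B$ is drawn from the mixture with PMF $\binom{a}{B}c_n(x,B)$. The decisive algebraic step is the cancellation of the normalizer: multiplying this predictive PMF against the $f_{n+1}$-mass $\ell(\cdot,B)/c_n(x,B)$ cancels $c_n(x,B)$ and leaves $\binom{a}{B}(1-\theta(x))^{B}\theta(x)^{a-B}$ on the left term and $\binom{a}{B}\theta(x)^{B}(1-\theta(x))^{a-B}$ on the right term, i.e.\ precisely the PMFs of $\mathsf{Bin}(a,1-\theta(x))$ and $\mathsf{Bin}(a,\theta(x))$. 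Re-attaching the surviving logarithms converts the two terms into $F_n(x)\,\E_{B}[\cdot]$ and $(1-F_n(x))\,\E_{\tilde B}[\cdot]$, which is \eqref{eq:BatchedInformationCriterion2}--\eqref{eq:BatchedInformationCriterion3}.

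The step demanding the most care is the bookkeeping of the two distinct roles of the oracle parameters. The accuracy $p(x)$ enters only through the reweighting factor $\ell(\cdot,B)$, and therefore survives both inside the logarithms and inside $c_n(x,B)$; the positive-response probability $\theta(x)$, by contrast, governs the sampling law of $B$ and so dictates the two different binomial measures against which the left and right contributions are averaged. Matching $\mathsf{Bin}(a,1-\theta(x))$ to the $F_n(x)$ term and $\mathsf{Bin}(a,\theta(x))$ to the $1-F_n(x)$ term, consistently with \eqref{eq:batched_updating_pba}, is exactly what makes the normalizers cancel and the binomial sums reassemble; the remainder is routine rearrangement.
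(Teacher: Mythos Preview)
Your route differs from the paper's and carries a gap in the general case.

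You take $D(f_{n+1};f_n)=\int_0^1 f_{n+1}(u)\log_2\bigl(f_{n+1}(u)/f_n(u)\bigr)\,du$, the standard KL with weight $f_{n+1}$. The paper, however, works with $\int_0^1 f_n(u)\log_2\bigl(f_{n+1}(u)/f_n(u)\bigr)\,du$ (weight $f_n$), as is explicit both in \eqref{eq:BatchedInformationCriterion0} and in the first display of its proof. Starting from the $f_n$-weighted integral, the paper takes the expectation over $B$ \emph{conditionally on the root location $u$}, so that on $\{u\le x\}$ one averages against $\mathsf{Bin}(a,1-\theta(x))$ and on $\{u>x\}$ against $\mathsf{Bin}(a,\theta(x))$; since both the log-ratio and the conditional law of $B$ are piecewise constant in $u$, the two-term formula \eqref{eq:BatchedInformationCriterion1} falls out immediately, with $p(x)$ sitting inside the logarithm and $\theta(x)$ governing the binomial laws, entirely decoupled. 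No cancellation of the normalizer is needed.

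Your cancellation step, by contrast, hinges on the claim that the marginal PMF of $B$ equals $\binom{a}{B}c_n(x,B)$. But $c_n(x,B)$ is built from the update weights $(1-p)^Bp^{a-B}$ and $p^B(1-p)^{a-B}$, whereas the mixture law of $B$ is built from $(1-\theta)^B\theta^{a-B}$ and $\theta^B(1-\theta)^{a-B}$. These coincide only when $p(x)=\theta(x)$. Since the paper sets $p(x)=\max\{\theta(x),1-\theta(x)\}$, whenever $\theta(x)<1/2$ one has $p(x)=1-\theta(x)$ and the two building blocks are \emph{swapped}: the predictive PMF is then $\binom{a}{B}\bigl[F_n(x)(1-\theta)^B\theta^{a-B}+(1-F_n(x))\theta^B(1-\theta)^{a-B}\bigr]$, which is not $\binom{a}{B}c_n(x,B)$. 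In that regime the normalizer fails to cancel against the $f_{n+1}$-mass, and the sums do not reassemble into the binomial expectations of \eqref{eq:BatchedInformationCriterion2}--\eqref{eq:BatchedInformationCriterion3}. So your argument, as written, establishes the formula only under the extra hypothesis $\theta(x)\ge 1/2$, while the paper's direct conditioning-on-$u$ proof covers both regimes uniformly.
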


\begin{proof}
	By definition, the KL divergence between $f_n$ and $f_{n+1}$ is:
	\begin{equation*}
	D(f_{n+1};f_n) = \int_{0}^{1} \log_{2}\left(\frac{f_{n+1}(u)}{f_{n}(u)}\right)f_{n}(u)du.
	\end{equation*}
Since $\mathbb{P}_{p}(B = j|a,x,u):=	\mathsf{Bin}(j;a,1-\theta(x))1_{\{u\leq x\}} + \mathsf{Bin}(j;a,\theta(x))1_{\{u>x\}}$ and for $0<u<x^*$ we have that $f_{n+1}(u):= [(1-p(x))^{B}p(x)^{a-B}]f_{n}(u)/c_{n}(x,B)$, cf.~\eqref{eq:batched_updating_pba}, so taking expectations end up with
	\begin{align*}
	\E[ D(f_{n+1};f_n)] & = \int_0^x\mathbb{E}\left[\log_{2}\left(\frac{(1-p(x))^{B}p(x)^{a-B}}{c_{n}(x,B)}\right)\right]f_n(u) du \\
	& \qquad + \int_x^1\mathbb{E}\left[\log_{2}\left(\frac{p(x)^{\tilde{B}}(1-p(x))^{a-\tilde{B}}}{c_{n}(x,\tilde{B})}\right)\right] f_n(u) du
	\end{align*}
	which simplifies to \eqref{eq:BatchedInformationCriterion1}. Above
	$$	c_n(x,B):=\left[ (1-p(x))^{B(x)} p(x)^{a-B(x)} \right]F_{n}(x) +  \left[ p(x)^{B(x)} (1-p(x))^{a-B(x)} \right](1-F_{n}(x))$$
	is the normalizing constant of the updating~\eqref{eq:batched_updating_pba}.
\end{proof}

\begin{figure}[hbt]
	{
		\centering
		\includegraphics[width=0.8\textwidth]{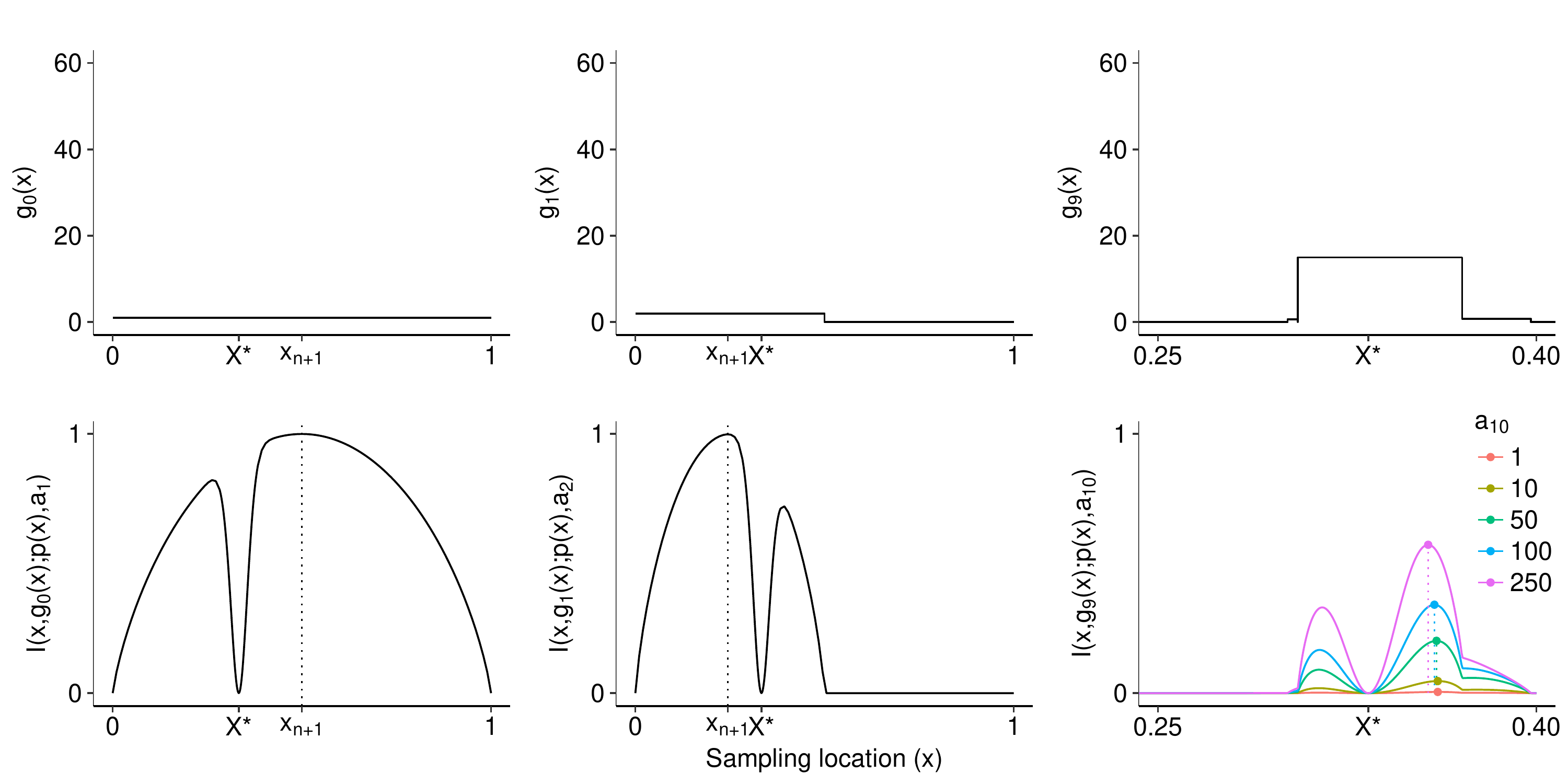}
		\caption{Data acquisition procedure using the batched information criterion $\cI(\cdot,g_{n};p(\cdot),a_{n+1})$ starting with a Uniform prior $g_{0}$ for the linear test function~\eqref{eq:g-ex1} and $x^{*}=1/3$. The first row shows the true Bayesian posterior $g_{n}$ for $n\in\{0,1,9\}$. The second row depicts the information gain function along with its maximizer $x_{n+1}$ (vertical dotted lines). The right-bottom plot shows the information criterion $\cI$ for several replication sizes $a_{10}\in\{1,10,50,100,250\}$ with the corresponding maximizers $x_{10}^{\mbox{\tiny sIDS}}$ of the information criterion $\cI$  (vertical dotted lines) given the knowledge state $g_{9}$ obtained by updating $g_{0}$ using fixed replication amounts $a_{1:9}=100$.
			\label{fig:MutualInformation2}}
	}
\end{figure}

{
 We now re-use KL divergence to define an acquisition function $\cI(x,f_{n};p(x),a) := \E_p[ D(f_{n+1};f_n)]$ as in \eqref{eq:BatchedInformationCriterion1} (emphasizing the dependence on the oracle accuracy $p$) and which is to be maximized over $x$.
To illustrate the relationship between the knowledge state $f_{n}$ and the batched information-criterion $x \mapsto \cI(x,f_{n};p(\cdot),a)$, Figure~\ref{fig:MutualInformation2} shows a realization of the Spatial IDS/PBA algorithm for a fixed batch size $a=100$ for $n=0,\ldots,9$ starting with $g_{0}\equiv \mathsf{Unif}(0,1)$ prior on $X^{*}$. The underlying response is~\eqref{eq:g-ex1} consisting of a decreasing linear function $h_{1}$ with root at $x^{*}=1/3$, and for now we assume access to the true oracle accuracy $p(x)=\Phi( 5|x-1/3|)$, so that the knowledge state is the exact Bayesian posterior $g_n$. We notice that sampling at $x^{\mbox{\tiny sIDS}}_{n+1}$ concentrates $g_{n}$ rapidly around the root $x^{*}$. Furthermore, $\cI$ typically has two local maxima, along with a global minimum at $x^{*}$ (sampling at the root is completely uninformative due to $p(x^*) = 0.5$). The right panel of Figure~\ref{fig:MutualInformation2} plots $x\mapsto \mathcal{I}(x,g_{{9}};p(x),a_{10})$ across different replication values $a_{10} \in \{1,10,5,100,250\}$. It can be seen that as the batch size $a$ is increased, information gain increases, but the maximizer $\argmax \cI(x,g_{9};p(x),a)$ (vertical dotted lines) does not change significantly. This is partly because the posterior $g_{{9}}$ is already concentrated.}

Crucially, maximizing \eqref{eq:BatchedInformationCriterion1} requires knowledge of the entire $x \mapsto p(x)$. This was one of the main challenges in the original G-PBA, where IDS was applied ad hoc \emph{after} estimating $p(\tilde{x}_i)$ at a set of $M \ge 2 $ candidate locations $\tilde{x}_{1:M}$. However, under our spatial modeling setting one can plug-in the surrogate $\hat{p}_{n}(x)$ and compute the maximizer of the resulting $\cI$ conditional on sampling $a_{n+1}\geq 1$ times at any $x\in (0,1)$. Thus, $x_{n+1}$ is chosen greedily as the maximizer of $\cI(\cdot, f_n, \hat{p}_n(\cdot), a_{n+1})$, that is,
\begin{equation}
\label{eq:Spatial-IDS}
x_{n+1}^{\mbox{\tiny sIDS}}:= \argmax_{x \in (0,1)} \cI(x,f_{n};\hat{p}_{n}(x),a_{n+1}).
\end{equation}

A numeric optimization procedure is needed to find $x_{n+1}^{\mbox{\tiny sIDS}}$. In our experiments below we utilize the \texttt{R} package \texttt{NLopt}~\citep{johnson2014nlopt}. In particular, we use the DIRECT (DIviding RECTangles) algorithm~\citep{jones1993lipschitzian} that implements gradient-free deterministic-search global optimization.

\medskip

The next two schemes switch the order, first picking $x_{n+1}$ and then $a_{n+1}$.

\textbf{Adaptive One-Step IDS policy.} Note that~\eqref{eq:Spatial-IDS} requires specifying the replication amount $a_{n+1}$. To implement the adaptive replication scheme~\eqref{eq:GP-AdaptiveBatching} within the IDS approach, we use an ad hoc heuristic which first maximizes $\cI$ using $a=1$ to get $x_{n+1}$ and then selects the replication amount
$a_{n+1}^{\nu}$. Let
\begin{equation}
\label{eq:OneStepIDS}
x_{n+1}^{\mbox{\tiny Ada-sIDS}} := \argmax_{x\in (0,1)} \ \cI(x;f_{n},\hat{p}_{n}(x),1).
\end{equation}
Conditional on $x_{n+1}$, $a_{n+1}$ is then picked to control the surrogate uncertainty at $x_{n+1}$ according to ~\eqref{eq:GP-AdaptiveBatching}. Observe that Ada-IDS is only feasible with a B-GP surrogate furnishing the predictive variance $s_{n}(x_{n+1})$.

\textbf{Randomized Quantiles Sampling. }The RQS strategy randomizes the next sampling location according to
\begin{equation}
\label{eq:randomized_policy}
x_{n+1}^{\mbox{\tiny RQS}} := F_{n}^{-1}(U_{n+1}), \quad \text{where}\quad  U_{n+1}\sim \mathsf{Unif}(0,1).
\end{equation}
The RQS policy can be interpreted as sampling based on the posterior distribution of $X^*$. This tends to sample close to the mean of $f_n$ but will also occasionally explore the latter's tails, capturing the trade-off between exploitation and exploration. An attractive feature of RQS is that it relies solely on $f_{n}$ so the surrogate $\theta_n$ is only used for updating $f_n$ in \eqref{eq:batched_updating_pba}.

\subsection{The Spatial Generalized Probabilistic Bisection Algorithm}

Summarizing the above developments, {Algorithm~\ref{alg:spatial-G-PBA} specifies the ingredients for blending surrogate modeling with probabilistic bisection. Two remarks are in order. First, the initialization step is non-sequential: we begin by employing $N_0 \times a_0 = T_{0} \ll T$ oracle evaluations to build $\hat{\varphi}_{N_0}$, picking equidistant (i.e.~space-filling) sites $x_{1:N_0}$ in $(0,1)$ and $a_0\ge 1$ replications per site. The corresponding $f_{T_0}$  is constructed via~\eqref{eq:batched_updating_pba}. Second, the surrogate re-fitting step in Algorithm \ref{alg:spatial-G-PBA} is user-controlled, since re-fitting can be expensive. In principle, re-fitting could be stopped entirely once $n$ is large enough, keeping the overhead cost of predicting $\theta_n(x)$ fixed, rather than increasing in $n$.} We also note that the chosen surrogates are non-sequential, i.e.~re-estimating $\hat{\varphi}_{n}$ is done from scratch, rather than via an updating formula (like is done for $f_n$).

\IncMargin{1em}
\begin{algorithm}
	\SetKwInOut{Input}{input}\SetKwInOut{Output}{output}
	\textbf{PBA parameters:} Prior $f_{0}$; $T_0$ and $a_{0}\ge 1$. Set $N_{0}:=T_{0}/a_{0}$\;
	\textbf{Surrogate initialization:} Regress $B_{1:N_{0}}$ on locations $x_{1:N_0}$ to obtain the surrogate model $\hat{\theta}_{N_{0}}$\;
	Update knowledge state starting from $f_{0}$ to $f_{T_{0}}$ given $\hat{\theta}_{N_{0}}$, $B_{1:N_{0}}$ and $x_{1:N_0}$\;
	$n\leftarrow N_{0}$, $T_{n}\leftarrow T_{0}$, $\mathcal{D}_{n} \leftarrow (B_{1:N_{0}},a_{1:N_{0}})$\;
	\While{$T_{n}<T$}{
		Using $f_n$ generate next sampling location $x_{n+1}$ and batch size $a_{n+1}$\;
		Query oracle $a_{n+1}$ times at $x_{n+1}$ to observe $B_{n+1}(x_{n+1})$\;
		\uIf{(OPTIONAL)}{
			Re-fit surrogate for $\hat{\theta}_{n+1}$ based on $\cD_{n+1} = (\cD_{n},B_{n+1},a_{n+1})$\;
		}
		\uElse{			
			$\hat{\theta}_{n+1}\leftarrow \hat{\theta}_{n}$\;
		}
		Update knowledge state at $x_{n+1}$ $f_{n+1}\leftarrow \Psi(f_{n},x_{n+1},B_{n+1};\hat{p}_{n+1}, a_{n+1})$ using $\hat{p}_{n+1} = \max\{\hat{\theta}_{n+1}(x_{n+1}),1-\hat{\theta}_{n+1}(x_{n+1})\}$\;
		Update $T_{n}\leftarrow T_{n} + a_{n+1}$ and $n\leftarrow n + 1$\;
	}
	\nllabel{alg:updating}
	\Return Knowledge state  $f_{N}$ and estimator for the root location $\hat{x}_{N} = \median(f_N)$\;
	\caption{Spatial Generalized-PBA.}\label{alg:spatial-G-PBA}
\end{algorithm}\DecMargin{1em}

\section{Numeric Examples}
\label{sec:NumericExamples}

We proceed to  empirically assess the performance of Algorithm~\ref{alg:spatial-G-PBA}. To do so, we mix-and-match the three components that the user must pick: the sampling policy $\eta$, surrogate model for $\hat{\theta}$  and the batch size $a$ (fixed or adaptive). To analyze the algorithm sensitivity to $(\eta,\hat{p},a)$, we consider multiple metrics regarding the quality of the root estimates, namely absolute residuals, credible interval length, and  corresponding coverage. Furthermore, we benchmark against schemes that are allowed to use the true posterior $g_n$ and $p(\cdot)$, quantifying the impact of learning the oracle. Our numeric examples are based on three test functions which capture different aspects and difficulties typically encountered in SRFPs, such as heteroscedasticity or zero curvature  at the root location.

\subsection{Experimental Setup}
\label{sec:ExperimentalSetup}

In analogy to \cite{waeber2013probabilistic,rodriguez2017generalized}, we utilize the  following three test functions $h_{i}(x)$ defined for $x \in (0,1)$:

	\begin{align} \text{linear}\qquad
	h_{1}(x) &= X^{*} -x ,  \qquad\qquad \sigma_1(x) = 0.2; \label{eq:g-ex1} \\
	\text{exponential} \quad
	h_{2}(x) &= e^{2(X^{*} - x)} - 1, \qquad \sigma_2(x) = 0.2 \cdot 1_{\{x < X^{*}\}} + 1 \cdot 1_{\{x > X^{*}\}};  \label{eq:g-ex2} \\
	\text{cubic}\qquad
	h_{3}(x) &= (X^{*} - x)^{3}, \qquad\quad \sigma_3(x) = 0.025. \label{eq:g-ex3}
	\end{align}

In all cases the stochastic simulator~\eqref{eq:z-oracle} consists of a Normally distributed $\epsilon(x){\sim} \mathsf{N}(0,\sigma_{i}^{2}(x))$ random noise and the root location $X^{*}\sim \mathsf{Unif}(0,1)$ is drawn from a Uniform distribution on $(0,1)$. We thus have that the ground-truth oracle $\theta(x)$ is given by $\theta_{i}(x):=\Phi(-h_{i}(x)/\sigma_{i}(x))$ for $i=1,2,3$.

 Figure~\ref{fig:testFunctions} displays the test functions $h_{i}(x)$  with $x^* = 1/3$ (first row), the maps $x \mapsto \theta_{i}(x)$ (second row), and the corresponding $\mbox{logit}(\theta_{i}(x))$ (third row) used for constructing $\hat{\varphi}$. The base example we investigate is the linear function \eqref{eq:g-ex1} whose slope is constant and significantly different from zero in locations close to the root $X^{*}$ and therefore leads to a simpler SRFP. In contrast, the curvature of \eqref{eq:g-ex2} together with the non-constant $\sigma_2(x)$ create a skew in the oracle and the posterior $f_n$.
Finally, example \eqref{eq:g-ex3} represents a difficult root-finding setting due to $h_{3}^{\prime}(X^{*}) = 0$, which implies that $p(x) \simeq 1/2$ in the vicinity of $X^{*}$.

\begin{figure}[hbt]
	{
		\centering
		\includegraphics[width=.6\textwidth]{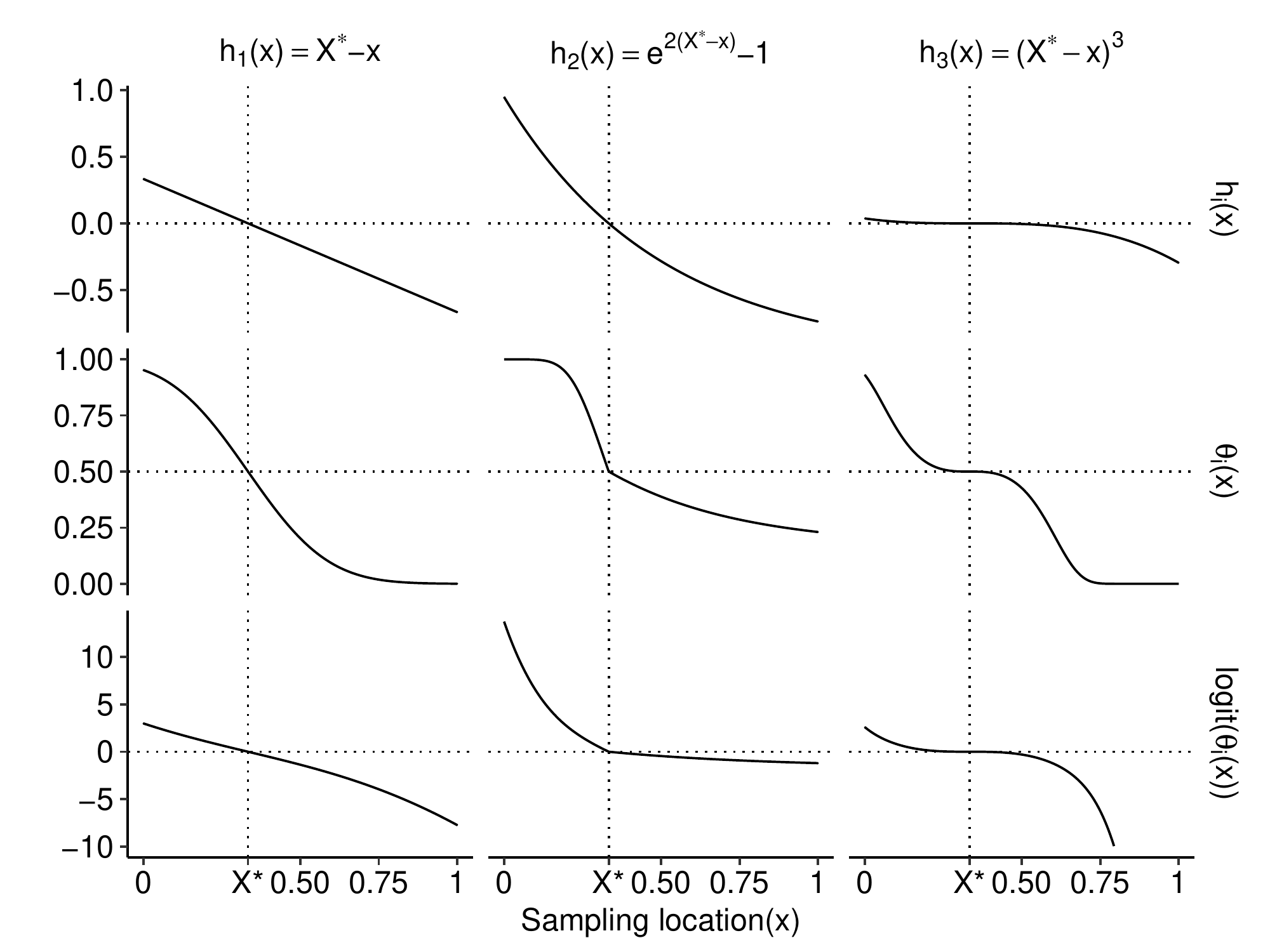}
		\caption[Graph of the three test functions used for numeric examples.]{Synthetic test functions \eqref{eq:g-ex1},~\eqref{eq:g-ex2} and~\eqref{eq:g-ex3} for Section~\ref{sec:NumericExamples}.
			\label{fig:testFunctions}}
	}
\end{figure}

\textbf{Performance Evaluation Metrics. }For a given configuration $(\eta,\hat{p},a)$ we use the following four performance metrics of the resulting Spatial G-PBA that all rely on $f_n$:
\begin{enumerate}
	\item \label{item:Perf1}\textit{Absolute residuals}: to determine the accuracy of the estimator $\hat{x}_n := \text{median}(f_n)$ we consider the $L_1$-residuals,
	\begin{equation}
	\label{eq:EvalMeasure-3}
	r(f_{n}) := |\hat{x}_{n} - x^{*}|;
	\end{equation}
	\item \label{item:Perf2} \textit{Credible intervals}: we evaluate the degree of uncertainty associated to the unknown root location $X^{*}$ through the length of a symmetric $(1-\alpha)$\% credible interval (CI) between the  $\alpha/2$ and $(1-\alpha/2)$ percentiles of $f_{n}$:
	\begin{equation}
	\label{eq:EvalMeasure-1}
	l_{1-\alpha}(f_{n}):= F_{n}^{-1}(1-\alpha/2) - F_{n}^{-1}(\alpha/2);
	\end{equation}
	
	\item \label{item:Perf3} \textit{Coverage}: to measure the accuracy of the above CI we evaluate
	\begin{equation}
	\label{eq:EvalMeasure-2}
	c_{1-\alpha}(f_{n}):=  \bm{Pr} \left\{x^{*} \in [F_{n}^{-1}(\alpha/2),F_{n}^{-1}(1-\alpha/2)] \right\},
	\end{equation}
	where the averaging in $\bm{Pr}\{ \cdot\}$ is across Monte-Carlo (MC) runs of the algorithm to capture the sampling distribution. If $c_{1-\alpha}(f_n) \ll (1-\alpha)$ the coverage test indicates that $f_n$ prematurely collapses or equivalently overstates its confidence about $X^*$. Small CI length $l_{1-\alpha}$ relative to residuals $r$ will lead to low coverage $c$.
	For both $c$ and $l$ we use $\alpha=0.05$.
	\item \label{item:Perf4}\textit{KL divergence}: given the chosen querying sites $x_{1:n}$,  we compare $f_{n}$ to the true posterior $g_{n}$ ({which is available for our three synthetic examples but not for the case-study in Section~\ref{sec:Results-AmericanOption}}) using  the KL divergence, $D(f_n;g_n)$. Since both $f_{n}$ and $g_{n}$ are updated at the same set of knots (sorted in increasing order)  $\tilde{x}_{1:n}$, we may write $g_{n}(x):= \sum_{j=1}^{n} g(\tilde{x}_{j-1})1_{x\in[\tilde{x}_{j-1},\tilde{x}_{j})}$ and $f_{n}(x):= \sum_{j=1}^{n} f(\tilde{x}_{j-1})1_{x\in[\tilde{x}_{j-1},\tilde{x}_{j})}$, with $\tilde{x}_{0}:=0$ and $\tilde{x}_{n}:=1$. We then obtain
	\begin{equation}
	\label{eq:KL-fn-gn}
	D(f_{n};g_{n}):= \sum_{j=1}^{n} \log\left(\frac{f(\tilde{x}_{j})}{g(\tilde{x}_{j})}\right)f(\tilde{x}_{j}) (\tilde{x}_{j}-\tilde{x}_{j-1}).
	\end{equation}	
	We make the usual convention that $\log(f(x)/g(x))f(x)=0$ if $f(x)=0$ (including when $g(x)=0$); as well as $\log(f(x)/g(x))f(x)=+\infty$ if $g(x)=0$ and $f(x)>0$~\citep{cover}. Practically, to estimate the average KL divergence we consider only finite values.
\end{enumerate}

Performance metrics \ref{item:Perf1}-\ref{item:Perf4} are averaged using a total of $MC=100$ Monte Carlo macro-iterations. To make all schemes comparable, we fix $X^{*}_{(i)}\sim\mathsf{Unif}(0,1)$ and each combination of $(\eta,\hat{p},a)$ is applied using the same root value $X^{*}_{(i)}$ during the $i$-th MC iteration, $i=1,\ldots,MC$.

\noindent\textbf{Surrogates for $p(\cdot)$: }

\begin{itemize}
	\item \textit{B-GP.} For the binomial GP (B-GP) we use the 5/2-Mat\'ern covariance kernel~\eqref{eq:matern52}. The hyper-parameters $\vartheta = (\tau^{2},l)$ are estimated via a Bayesian MAP estimation procedure, placing a square root uniform prior (i.e., $q_{0}(\sqrt{\tau^{2}})\propto 1)$ on $\tau^{2}$ and a Student-$t$ prior on the length scale parameter $l$ (both default priors for binomial GPs in \texttt{GPstuff}). Although parameter estimation can be expensive, the B-GP is \textit{re-fitted} and \textit{updated} every $T_{n}=a_{n}$ simulation outputs; that is, the hyper-parameters $\hat{\vartheta}$ are re-fitted and the posterior mode~$\hat{\bm{\varphi}}_{n}$ is re-computed every time a new pair of sampling location/binomial response is observed, such that the surrogate is able to assimilate acquired information.
	
	\item \textit{KLR. }Kernel Logistic Regression (KLR) is implemented with the Gaussian kernel basis function~\eqref{eq:klr-gaussian} using a  \textit{fixed} length scale parameter $l\equiv 1$ and centering $\phi_{j}$ at each sampling location $\xi_j \equiv x_{j}$, $j = 1,\ldots,n$ (implying that we use as many kernel functions as sampling points to learn $\varphi$). Since we would like to induce a surrogate model $\hat{\varphi}$ that closely resembles the local estimators $\hat{p}(\cdot)$, we use a (small) \textit{fixed} value  $\lambda = 0.01$ as the penalty parameter for optimizing~\eqref{eq:PenalizedNonParametricRegression}. Numerically, we implement KLR as stated in Algorithm 1 of~\cite{zhu2005kernel}.
	
	\item \textit{SLR: } We consider a \textit{smoothing} spline logistic regression (SLR) model where the penalty coefficient $\hat{\lambda}$ (aka smoothing parameter) is estimated via Generalized Cross-Validation~\citep{friedman2001elements} jointly with the spline basis coefficients. In this case, the spline knots $\xi_j$ are placed at percentiles of the sampling locations $x_{1:n}$. Thus, as the mass of $f_n$ concentrates around $x^{*}$ (and hence sampling locations $x_{1:n}$ concentrate around the root), more knots $\xi_{j}$'s are also placed near $X^{*}$,  making the surrogate more localized in regions where the variability of the binomial responses $B_{n}$ is maximal.
	
	\item \textit{LR. } Polynomial logistic regression with $\varphi(x) = \beta_{0} + \sum_{j=1}^{5}\beta_{j}x^{5}$, a {quintic} polynomial and zero penalty  $\lambda = 0$ (to enforce surrogate flexibility). Both the SLR and LR surrogates are implemented using the \texttt{gam()} routine from the \texttt{mgcv} package in \texttt{R}~\citep{wood2001mgcv}.
\end{itemize}

\noindent\textbf{Sampling Policies $\eta$: }
\begin{itemize}
	\item Spatial Information-Directed Sampling (sIDS)~\eqref{eq:Spatial-IDS};
	\item Spatial Randomized Quantile Sampling (sRQS)~\eqref{eq:randomized_policy};
	\item One-step sIDS~\eqref{eq:OneStepIDS} combined with the adaptive replication scheme $a_{n+1}^{\nu}$.
\end{itemize}

For the initialization stage in  Algorithm~\ref{alg:spatial-G-PBA} we use $N_{0}$ equally spaced  $x_{1:N_{0}}$ to learn $\varphi_{N_0}(\cdot)$ non-sequentially. In our experiments all surrogates are initialized using $T_{0}:=5000$ (i.e., 25\% of total sampling budget) oracle evaluations with  $a_{0} \in \{100,250\}$ which results in $N_{0}:=T_{0}/a_{0} \in \{50,20\}$ initial training locations.

\noindent \textbf{Adaptive Replication $a_{n+1}^{\nu}$. }The scheme~\eqref{eq:GP-AdaptiveBatching} has two parameters: the \textit{minimum replication amount} $a_0^\nu$ and the variance \textit{thresholding sequence} $(\nu_n)_{n\ge 1}$. In our experiments, we use  $a_0^\nu := 1$
in order to favor exploration in regions where the spatial surrogate $\varphi$ already learned $p(\cdot)$ sufficiently well, {as quantified in terms of the predictive posterior GP variance~\eqref{eq:PosteriorPredictiveVariance-n}}. For  $\nu_{n}$ we use the following two variants (see Algorithm~\ref{alg:spatial-G-PBA}):
\begin{equation}
\label{eq:v_n}
\nu_{n}^{(100)}:= 0.1/n \quad \text{ when } a_{0}=100 \quad \text{ and }\quad \nu_{n}^{(250)}:= 0.05/n \quad \text{when } \quad a_{0}=250.
\end{equation}
{This choice is linked to the fact that since the initialization stage budget $T_0$ is fixed, larger $a_0$ makes $N_0$ smaller and hence
leads to larger $s_n$, and so we take the thresholds $\nu_n$ larger as well.}
{  To avoid excessive batching which could occasionally arise in our implementation we bound $a_{n+1}^{\nu}$ (specifically by 1000 in all experiments). This allows to manage the overall sampling budget in order to enforce exploration.}

\textcolor{black}{Figure~\ref{fig:AdaptiveBatching} depicts the \emph{realized} replication amounts $n \mapsto a_{n+1}^{\nu}(x_{n+1})$ using the one-step sIDS policy~\eqref{eq:OneStepIDS} applied to our running example~\eqref{eq:g-ex1} (during initialization, $n\le N_{0}:=T_{0}/a_{0}$, $a_n \equiv a_0$ is fixed). We observe that  $a_{n+1}^{\nu}$ generally slowly decreases  as $n$ rises, although the local behavior can be quite ``spiky'': sometimes a large batch is required to bring $s_{n}^{2}(x_{n+1})$ below $\nu_{n}$, see top panels of Figure~\ref{fig:AdaptiveBatching}. One reason is that as $n$ increases, the sampling concentrates around $x^*$. Since this region quickly becomes well-explored, we usually obtain quite low $s_{n}^{2}(x_{n+1})$ making $a_{n+1}$ low as well.
To give a sense of the macro-time behavior, by $T=2 \cdot 10^4$ the median number of sampling locations is $N_T=291$ and $N_T=116$ (for mean replication amounts of 70 and 170 respectively) for the thresholding sequences $\nu_{n}^{(100)}$ and $\nu_{n}^{(250)}$ in~\eqref{eq:v_n}.}

\begin{figure}[ht]
	{
		\centering
		\includegraphics[width=0.75\textwidth]{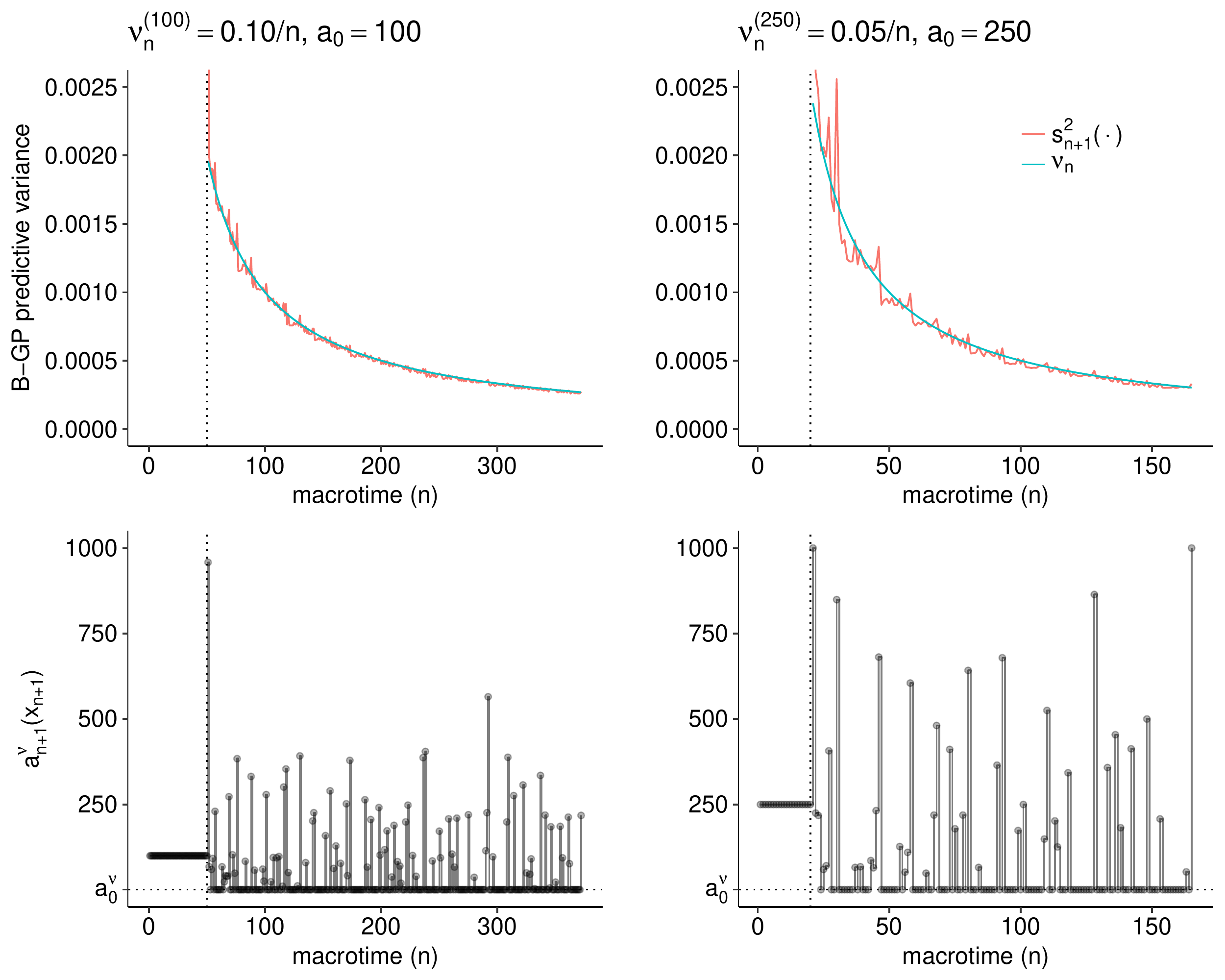}
		\caption[Adaptive replication scheme using Binomial Gaussian Processes.]{\textit{First row}: estimated predictive variance $s_{n}^{2}(x_{n+1})$ and the thresholding sequences $\nu_{n}^{(100)}=0.1/n$ (first column) and $\nu_{n}^{(250)}=0.05/n$ (second column), when the initial batch size $a_{0}$ for initializing the B-GP is $a_{0}=100,250$, respectively. \textit{Second row}: adaptive replication amount $a_{n+1}^{\nu}(x_{n+1})$ ($y$-axis) in macro-time  $n$ ($x$-axis) selecting $x_{n+1}$ using the one-step IDS criterion~\eqref{eq:OneStepIDS}.
			\label{fig:AdaptiveBatching}}
	}
\end{figure}

\subsection{Illustrating Spatial G-PBA}

Figure~\ref{fig:FinalSurrogate} compares the fitted surrogate models using a fixed dataset $\mathcal{D}_{N}^{\eta}$ (in order to remove the effect of the design and target surrogate accuracy) generated using two different sampling policies $\eta$: sIDS (first row) and sRQS (second row) implemented using the true posterior $g_{n}$ and the linear test function~\eqref{eq:g-ex1}. For all models a fixed batch size $a\equiv 100$ and $N=200$ total training locations is used. Figure~\ref{fig:FinalSurrogate} depicts three fundamental features of spatial G-PBA: (i) the sIDS strategy achieves lower posterior $X^{*}$-uncertainty relative to the sRQS policy, as seen in the narrower confidence bands depicted in the right panel; (ii)  the design of the sIDS strategy brackets the root, gradually squeezing the posterior $f_n$ towards $X^*$; and (iii) the spatial surrogates succeed in learning the true $\theta_{1}(x):=\Phi(-\frac{1/3-x}{0.20})$ especially around the root, cf.~the left panels of the Figure. As a result, root estimation is significantly improved and leads to reliable posterior CIs on the right panels of Figure~\ref{fig:FinalSurrogate}.

\begin{figure}[ht]
	{
		\centering
		\includegraphics[width=.75\textwidth]{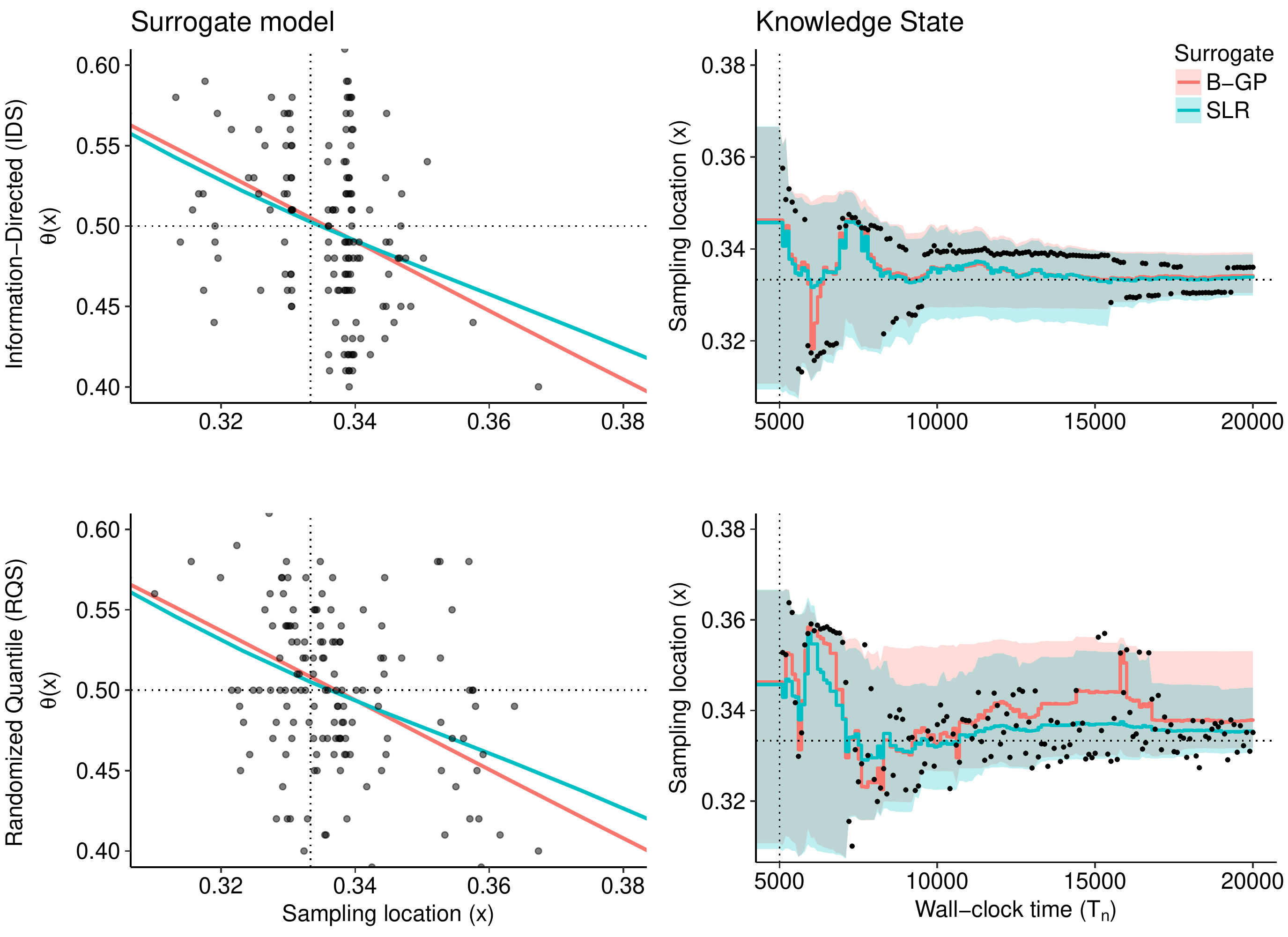}
		\caption[Comparison of surrogate models and sampling policies using spatial G-PBA.]{Spatial G-PBA with the linear test function~\eqref{eq:g-ex1}. \textit{Left}: B-GP, and SLR surrogates trained on a fixed dataset obtained using the sIDS policy (first row) and the sRQS policy (second row) at $T=2 \cdot 10^4$ and batch size $a=100$ (so that $N=200$). The $x$-axis is zoomed to the neighborhood of $x^* = 1/3$, so does not show the full $\cD_N$.  \textit{Right}: posterior inter-quantile range (shaded regions) across spatial surrogates (colors) and sampling policies (rows) as a function of $T_n$. We also show the corresponding root estimates $\hat{x}_n=\median(f_{n})$ (lines). The true $\theta(\cdot)$, as well as the estimated median using $g_{n}$ (i.e., knowledge state with the true $\theta(\cdot)$) is shown with a dashed line, respectively.
			\label{fig:FinalSurrogate}}
	}
\end{figure}

\subsection{Results}

\begin{table}[ht]
	\centering
	\caption[Performance metrics for the linear test function $h_{1}$ implementing the spatial G-PBA.]{Performance Monte-Carlo metrics  for the test function $h_{1}$ at $T=20,000$.
		\label{tab:results-h1}
	}
\resizebox{\columnwidth}{!}{%
	\begin{tabular}{c|c|rr|rr|rr|rr}
		\hline
		\multirow{ 2 }{*}{Policy $\eta$} & \multirow{ 2 }{*}{$\hat{p}$} & \multicolumn{2}{c|}{$\hat{r}(f_{T})$  ($10^{-2}$) }& \multicolumn{2}{c|}{$\hat{l}_{0.95}(f_{T})$  ($10^{-2}$) }& \multicolumn{2}{c|}{$\hat{c}_{0.95}(f_{T})$   }& \multicolumn{2}{c}{$\hat{D}(f_{T};g_{T})$}\\
		& & $a_{0}{=}100$ & $a_{0}{=}250$  & $a_{0}{=}100$ & $a_{0}{=}250$ &  $a_{0}{=}100$ & $a_{0}{=}250$ & $a_{0}{=}100$ &  $a_{0}{=}250$  \\
		\cline{1-10}\multirow{ 4 }{*}{ sIDS } & \multirow{ 1 }{*}{ B-GP } & 0.2241 & 0.1874 & 0.8931 & 0.9215 & 0.88 & 0.98 & 0.62 & 0.39 \\
		& \multirow{ 1 }{*}{ KLR } & 0.2106 & 0.2037 & 0.8998 & 0.9496 & 0.95 & 0.96 & 0.57 & 0.38 \\
		& \multirow{ 1 }{*}{ SLR } & 0.1864 & 0.1954 & 0.8669 & 0.8810 & 0.87 & 0.89 & 0.64 & 0.59 \\
		& \multirow{ 1 }{*}{ LR } & 0.1956 & 0.1709 & 0.8852 & 0.8708 & 0.94 & 0.98 & 0.56 & 0.38 \\
		\cline{1-10}\multirow{ 4 }{*}{ sRQS } & \multirow{ 1 }{*}{ B-GP } & 0.2230 & 0.1985 & 1.2683 & 1.3497 & 0.95 & 0.99 & 0.61 & 0.48 \\
		& \multirow{ 1 }{*}{ KLR } & 0.2152 & 0.1734 & 1.2052 & 1.3843 & 0.99 & 0.99 & 0.51 & 0.41 \\
		& \multirow{ 1 }{*}{ SLR } & 0.1935 & 0.2181 & 1.2027 & 1.2302 & 1.00 & 0.96 & 0.57 & 0.60 \\
		& \multirow{ 1 }{*}{ LR } & 0.1840 & 0.2012 & 1.2543 & 1.3174 & 0.96 & 0.97 & 0.56 & 0.50 \\
		\cline{1-10}\multirow{ 1 }{*}{ Ada-sIDS } & \multirow{ 2 }{*}{ B-GP } & 0.2016 & 0.2060 & 0.9730 & 1.0051 & 0.97 & 0.96 & 0.34 & 0.33 \\
		\multirow{ 1 }{*}{ Ada-sRQS } &  & 0.3025 & 0.2398 & 1.4612 & 1.5013 & 0.99 & 1.00 & 0.16 & 0.22 \\
		\hline\hline
		\cline{1-10}\multirow{ 2 }{*}{ Det-IDS } & \multirow{ 1 }{*}{ $\bar{p}$ } & 0.3692 & 0.2996 & 0.0196 & 0.0773 & 0.01 & 0.05 & 26.58 & 7.85 \\
		& \multirow{ 1 }{*}{ $\hat{p}_{\mathscr{L}_{0}}$ } & 0.4377 & 0.3576 & 0.0769 & 0.2068 & 0.03 & 0.13 & 21.78 & 6.40 \\
		\cline{1-10}\multirow{ 2 }{*}{ RQS } & \multirow{ 1 }{*}{ $\bar{p}$ } & 0.4422 & 0.2528 & 0.0000 & 0.0038 & 0.00 & 0.01 & 31.74 & 19.59 \\
		& \multirow{ 1 }{*}{ $\hat{p}_{\mathscr{L}_{0}}$ } & 0.4099 & 0.2735 & 0.0384 & 0.0202 & 0.01 & 0.03 & 32.60 & 17.63 \\	\hline
	\end{tabular}
}
\end{table}

Table~\ref{tab:results-h1} shows the results for the linear test function~\eqref{eq:g-ex1}. To allow a direct comparison to the non-spatial G-PBA, the last few rows present the performance of the best \textit{local} G-PBA schemes
as identified in~\cite{rodriguez2017generalized}:
\begin{itemize}
	\item  the \textit{empirical majority  proportion}, $\bar{p}(x_n):= \max\{B_n/a_n,1-B_n/a_n \}$; and
	\item  the \textit{posterior mode} given $\bar{p}(x_n)$, $\hat{p}_{L_{0}}(x):=\argmax_{p\in(1/2,1)}\pi(p|\bar{p}(x))$; where $\pi(\cdot|\bar{p}(x))$ is the posterior density of $p$ seen as a random variable with prior $\pi_{0} \equiv \mathsf{Unif}(1/2,1)$.
\end{itemize}
The local estimators $\bar{p}$ and $\hat{p}_{L_{0}}$ are then applied within two non-spatial G-PBA policies:
\begin{itemize}
	\item  Deterministic-IDS (Det-IDS) which chooses $x_{n+1}$ by maximizing  $\cI(\tilde{x}_{n,i};f_{n},\hat{p}(\tilde{x}_{n,i}), a_{0})$ among the two candidates $\tilde{x}_{n,i} \in  \{F_{n}^{-1}(0.25),F_{n}^{-1}(0.75)\}$ (i.e., the 25-th and 75-th quantiles of $f_{n}$);
	\item  Local RQS which selects $x_{n+1}$ according to \eqref{eq:randomized_policy}.
\end{itemize}

Table~\ref{tab:results-h1} demonstrates that using surrogate modeling substantially improves root estimation relative to the original G-PBA. Indeed, we obtain significantly lower residuals (roughly half as big), and narrower CI across while maintaining a high probability coverage.
The latter shows that the knowledge state $f_T$ is correctly converging to the true root value $X^{*}$. In particular, {polynomial} logistic regression (LR) offers a good choice as it minimizes the average residuals and length of CI, as well as matches the nominal coverage $\hat{c}(f_{T})\approx 0.95$, confirming that $f_T$ is close to the true posterior $g_T$.

Importantly, we can see that spatial modeling leads to a nearly two orders of magnitude reduction in the average KL divergence between $f_T$ and $g_T$, primarily due to the lower bias in the estimation of $p(\cdot)$ relative to the two local estimators considered. Thus, spatial G-PBA successfully resolves the problem of $f_T$ experiencing premature collapse which was a major concern in G-PBA where $c_\alpha$ was frequently unacceptably low. We note that $\hat{D}(f_{T};g_{T})$ is consistently low across all surrogate models $\varphi$, indicating that the goodness-of-fit for $\theta(\cdot)$ is not overly sensitive to the choice of the surrogate type.

In terms of the sampling policies, sIDS outperforms sRQS since the respective average residuals and CI length are lower while preserving a high coverage probability. For the replication regime $a_{n}$, we note a preference for $a=250$ (i.e., a total of $N=80$ design sites) which tends to yield better learning rates about $p(\cdot)$ (and therefore about $X^{*}$) compared to $a=100$, as measured by the average KL divergence. Adaptive batching generally under-performs, especially sRQS that frequently uses excessive batch sizes (see Fig.~\ref{fig:AdaptiveBatching}) far from the root, and hence does not exploit sufficiently. \textcolor{black}{At the same time, adaptive batching achieves the lowest KL divergence. It remains an open question how to best select the thresholding sequence.}

\textbf{Empirical results for the exponential and cubic test functions. }Tables~\ref{tab:results-h2} and~\ref{tab:results-h3} show the performance metrics for the test functions $h_{2}$ and $h_{3}$, respectively. Results are largely similar. As for $h_1$ we observe a large improvement in performance relative to non-spatial G-pBA, especially in terms of the coverage probability $\hat{c}$, \textcolor{black}{which was improved from $\hat{c}_{0.95} \approx 0$ (meaning the algorithm fails completely in providing a CI for $x^*$) to the actual nominal CI coverage value, see the right-most columns of Table~\ref{tab:results-h2} and Table~~\ref{tab:results-h3}. In terms of sampling policies, we note that sIDS again outperforms sRQS in terms of average absolute residuals and length of CI for both $h_2$ and $h_3$. Furthermore, polynomial logistic regression (LR) continues to be the best surrogate choice combined with {fixed} batch of $a_{n} \equiv 100$ (i.e., using a total of $N=200$ design points) implying a preference for \textit{exploration} in these harder problems}. We note that B-GP performs worse, especially for $h_{2}$, possibly due to the non-smoothness of $\theta_2(\cdot)$ at the root (cf.~Figure~\ref{fig:testFunctions}). Because B-GP assumes a smooth response surface it fails to properly capture such ``cusp'' that calls for a spatially non-stationary covariance structure.

\begin{table}[ht]
	\centering
	\caption[Performance metrics for the exponential test function $h_{2}$ implementing the spatial G-PBA.]{Performance Monte-Carlo metrics  for the test function $h_{2}$ at $T=20,000$.
		\label{tab:results-h2}
	}
\resizebox{\columnwidth}{!}{%
	\begin{tabular}{l|c|rr|rr|rr|rr}
		\hline
		\multirow{ 2 }{*}{Policy $\eta$} & \multirow{ 2 }{*}{$\hat{p}$} & \multicolumn{2}{c|}{$\hat{r}(f_{T})$  ($10^{-2}$) }& \multicolumn{2}{c|}{$\hat{l}_{0.95}(f_{T})$  ($10^{-2}$) }& \multicolumn{2}{c|}{$\hat{c}_{0.95}(f_{T})$   }& \multicolumn{2}{c}{$\hat{D}(f_{T};g_{T})$}\\
		& & $a_{0}{=}100$ & $a_{0}{=}250$  & $a_{0}{=}100$ & $a_{0}{=}250$ &  $a_{0}{=}100$ & $a_{0}{=}250$ & $a_{0}{=}100$ &  $a_{0}{=}250$  \\
		\cline{1-10}\multirow{ 4 }{*}{ sIDS } & \multirow{ 1 }{*}{ B-GP } & 0.5330 & 0.4439 & 1.1043 & 1.1666 & 0.45 & 0.53 & 1.61 & 1.41 \\
		 & \multirow{ 1 }{*}{ KLR } & 0.4537 & 0.4098 & 0.7974 & 0.8751 & 0.46 & 0.44 & 3.23 & 2.79 \\
		 & \multirow{ 1 }{*}{ SLR } & 0.4352 & 0.4076 & 1.0995 & 1.2555 & 0.67 & 0.80 & 1.21 & 0.86 \\
		 & \multirow{ 1 }{*}{ LR } & 0.3814 & 0.4128 & 1.0641 & 1.1795 & 0.60 & 0.56 & 2.08 & 1.61 \\
		\cline{1-10}\multirow{ 4 }{*}{ sRQS } & \multirow{ 1 }{*}{ B-GP } & 0.4817 & 0.5162 & 1.4630 & 1.6117 & 0.70 & 0.73 & 1.35 & 1.42 \\
		 & \multirow{ 1 }{*}{ KLR } & 0.4602 & 0.5440 & 1.0787 & 1.3580 & 0.57 & 0.60 & 2.68 & 1.91 \\
		 & \multirow{ 1 }{*}{ SLR } & 0.3956 & 0.4250 & 1.6651 & 1.7434 & 0.82 & 0.83 & 0.93 & 0.86 \\
		 & \multirow{ 1 }{*}{ LR } & 0.4653 & 0.5143 & 1.7161 & 1.5902 & 0.79 & 0.67 & 1.40 & 1.34 \\
		\cline{1-10}\multirow{ 1 }{*}{ Ada-sIDS } & \multirow{ 2 }{*}{ B-GP } & 0.5095 & 0.4883 & 1.3638 & 1.2129 & 0.49 & 0.52 & 1.62 & 1.83 \\
		\multirow{ 1 }{*}{ Ada-sRQS } & & 0.5586 & 0.5088 & 1.6736 & 1.7562 & 0.74 & 0.77 & 1.00 & 0.97 \\
		\hline \hline \cline{1-10}\multirow{ 2 }{*}{ Det-IDS } & \multirow{ 1 }{*}{ $\bar{p}$ } & 0.6848 & 0.4418 & 0.0211 & 0.1158 & 0.02 & 0.07 & 27.40 & 8.85 \\
		& \multirow{ 1 }{*}{ $\hat{p}_{\mathscr{L}_{0}}$ } & 0.6570 & 0.5756 & 0.0090 & 0.3639 & 0.01 & 0.18 & 24.10 & 6.99 \\
		\cline{1-10}\multirow{2}{*}{ RQS } & \multirow{ 1 }{*}{ $\bar{p}$ } & 0.7075 & 0.4846 & 0.0428 & 0.0649 & 0.02 & 0.07 & 27.88 & 12.36 \\
		& \multirow{ 1 }{*}{ $\hat{p}_{\mathscr{L}_{0}}$ } & 0.8442 & 0.4686 & 0.0477 & 0.0527 & 0.02 & 0.06 & 24.43 & 10.71 \\
		\hline
	\end{tabular}
}
\end{table}

\begin{table}[ht]
	\centering
	\caption[Performance metrics for the cubic test function $h_{3}$ implementing the spatial G-PBA.]{Performance Monte-Carlo metrics  for the test function $h_{3}$ at $T=20,000$.
		\label{tab:results-h3}
	}
\resizebox{\columnwidth}{!}{%
	\begin{tabular}{l|c|rr|rr|rr|rr}
		\hline
		\multirow{ 2 }{*}{Policy $\eta$} & \multirow{ 2 }{*}{$\hat{p}$} & \multicolumn{2}{c|}{$\hat{r}(f_{T})$  ($10^{-2}$) }& \multicolumn{2}{c|}{$\hat{l}_{0.95}(f_{T})$  ($10^{-2}$) }& \multicolumn{2}{c|}{$\hat{c}_{0.95}(f_{T})$   }& \multicolumn{2}{c}{$\hat{D}(f_{T};g_{T})$}\\
		& & $a_{0}{=}100$ & $a_{0}{=}250$  & $a_{0}{=}100$ & $a_{0}{=}250$ &  $a_{0}{=}100$ & $a_{0}{=}250$ & $a_{0}{=}100$ &  $a_{0}{=}250$  \\
		\cline{1-10}\multirow{ 4 }{*}{ sIDS } & \multirow{ 1 }{*}{ B-GP } & 4.3661 & 4.2959 & 8.2188 & 9.7472 & 0.57 & 0.64 & 1.87 & 1.57 \\
		 & \multirow{ 1 }{*}{ KLR } & 4.3403 & 4.5771 & 13.0221 & 12.0456 & 0.76 & 0.76 & 1.32 & 1.36 \\
		 & \multirow{ 1 }{*}{ SLR } & 4.4470 & 4.6160 & 7.3444 & 7.7896 & 0.49 & 0.46 & 2.35 & 2.23 \\
		 & \multirow{ 1 }{*}{ LR } & 3.7645 & 3.6936 & 10.6028 & 10.5738 & 0.71 & 0.70 & 1.52 & 1.39 \\
		\cline{1-10}\multirow{ 4 }{*}{ sRQS } & \multirow{ 1 }{*}{ B-GP } & 4.1913 & 4.0209 & 10.7298 & 10.8774 & 0.67 & 0.67 & 1.70 & 1.45 \\
		 & \multirow{ 1 }{*}{ KLR } & 3.9131 & 3.7121 & 14.2680 & 14.0897 & 0.81 & 0.84 & 1.17 & 0.98 \\
		 & \multirow{ 1 }{*}{ SLR } & 4.0451 & 4.1825 & 10.3663 & 10.2469 & 0.69 & 0.68 & 1.79 & 2.03 \\
		 & \multirow{ 1 }{*}{ LR } & 3.6513 & 4.1276 & 12.9502 & 11.5623 & 0.80 & 0.66 & 1.27 & 1.31 \\
		\cline{1-10}\multirow{ 1 }{*}{ Ada-sIDS } & \multirow{ 2 }{*}{ B-GP } & 4.1540 & 4.2334 & 11.1918 & 11.3152 & 0.68 & 0.67 & 1.11 & 1.11 \\
		\multirow{ 1 }{*}{ Ada-sRQS } &  & 4.1874 & 4.0915 & 11.6808 & 12.9052 & 0.67 & 0.76 & 1.39 & 1.05 \\
		\hline \hline \cline{1-10}\multirow{2}{*}{ Det-IDS } & \multirow{ 1 }{*}{ $\bar{p}$ } & 5.3257 & 4.8835 & 0.0187 & 0.4446 & 0.00 & 0.03 & 33.34 & 11.93 \\
		& \multirow{ 1 }{*}{ $\hat{p}_{\mathscr{L}_{0}}$ } & 5.7587 & 5.3403 & 0.0001 & 0.3862 & 0.00 & 0.01 & 27.94 & 10.01 \\
		\cline{1-10}\multirow{2}{*}{ RQS } & \multirow{ 1 }{*}{ $\bar{p}$ } & 5.1556 & 4.7262 & 0.0000 & 0.2978 & 0.00 & 0.01 & 37.77 & 15.16 \\
		& \multirow{ 1 }{*}{ $\hat{p}_{\mathscr{L}_{0}}$ } & 5.3406 & 4.7325 & 0.0001 & 0.7267 & 0.00 & 0.02 & 31.77 & 12.68 \\
		\hline
	\end{tabular}
}
\end{table}

\subsection{Evaluating the Quality of the Design}
\label{sub:qualityDesign}

To focus on the sampling aspect of spatial G-PBA, we examine more closely the designs $x_{1:n}^{(a,\eta)}$ obtained from implementing the sampling policy $\eta$ and batch size $a$. For this analysis we return to $h_1$ in ~\eqref{eq:g-ex1} and use B-GP as the representative surrogate with a fixed batch size of $a=100$. To judge the quality of $x_{1:n}^{(a,\eta)}$ for the SRFP, we compute the resulting exact posterior
$g_{n}^{(a,\eta)}( \cdot | x_{1:n}^{(a,\eta)})$ and evaluate the resulting
absolute residual $| \median (g_{n}^{a,\eta})- x^*|$ and corresponding length of $(1-\alpha)\%$-CI. A design that is better quantifying uncertainty about $X^*$ should have lower residuals and lower CI. We then benchmark the resulting metrics  against the following \textit{baseline} schemes which utilize the true $p(x)$ (and therefore the actual posterior density $g_{n}$):
	\begin{align} \tag{IDS}
	\label{eq:IDSTrueSampling}
	x_{n+1} &:= \argmax_{x \in (0,1)} \; \mathcal{I}(x,g_{n};p(x),a) \\
	\label{eq:RQSTrueSampling} \tag{RQS}
	x_{n+1} &:= G_{n}^{-1}(U_{n+1}),\quad  U_{n+1} \sim \mathsf{Unif}(0,1); \\
	\label{eq:UniformSampling} \tag{Unif}
	x_{n+1} &\sim \mathsf{Unif}(0,1).
	\end{align}

The sampling strategy \eqref{eq:IDSTrueSampling} is optimal in the sense of maximizing the expected KL distance between $g_{n}$ and $g_{n+a}$, and hence we use it as an upper bound on performance; \eqref{eq:UniformSampling} is a \textit{passive policy} used as a lower bound. To make the baseline policies comparable with the spatial G-PBA strategies, we implement the former with batched sampling using the transition function \eqref{eq:batched_updating_pba} and $a=100$. We also match the initialization step, employing $N_0 = T_0/a_0$ equidistant locations $x_{1:N_0}$ (with $T_0 = 5000$) to construct $g_{T_0}$, from which \eqref{eq:IDSTrueSampling}, \eqref{eq:RQSTrueSampling} and \eqref{eq:UniformSampling} are implemented.

Figure~\ref{fig:ResidualsActual} visualizes the results. We observe that sIDS is the sampling policy which best approximates the true IDS, and that all G-PBA strategies significantly outperform the~\eqref{eq:UniformSampling} baseline strategy.
   Interestingly,  both randomized and information-directed policies appear to have similar asymptotic performance in terms of average residuals and CI length.

\begin{figure}[htb]
	{
		\centering
		\includegraphics[width=0.9\textwidth]{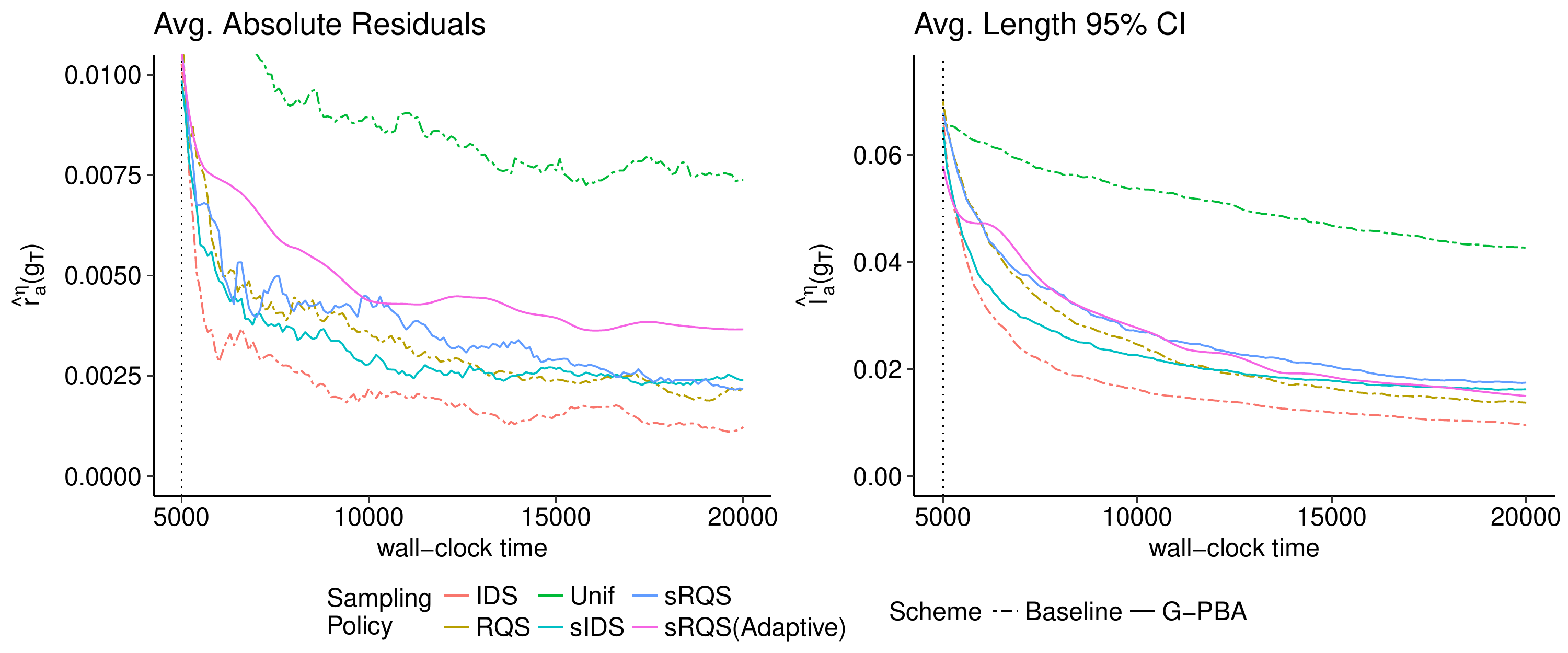}
		\caption[Comparison of spatial sampling policies with respect to baseline policies.]{Comparison of spatial sampling policies with respect to baseline policies using the true posteriors $g_T$. \emph{Left panel}: average absolute residuals, $\median(g_T),$ against wall-clock time $T$; \emph{Right panel:} average length of 95\% CI $l_{0.95}(g_{T})$ against $T$. Replication amounts fixed at $a_{n}=250\; \forall n$. \label{fig:ResidualsActual}}
	}
\end{figure}

\section{Case-Study: Root-Finding for Optimal Stopping}\label{sec:Results-AmericanOption}

In this section, we apply the spatial G-PBA Algorithm~\ref{alg:spatial-G-PBA} to solve the root-finding sub-routine for pricing a Bermudan Put option~\citep{ludkovski2015kriging}. Valuing a Bermudan option with maturity $\tilde{T}$ is equivalent to solving an optimal stopping problem
\[
V(t,x) := \sup_{\tilde{T} \geq \tau\geq t, \tau \in \mathcal{S}} \mathbb{E}\left[H(\tau, X_{\tau})|X_{t} = x \right],
\]
where $(X_t)$ is a stochastic process and $H(t,x)$ is the \emph{reward function}.  Assuming the classical discretized Black-Scholes model with time step $\Delta t$ we have that $(X_t)$ is a log-normal random walk and $H(t,x) := e^{-rt}(K^{Put} - x)_{+}$, where $K^{Put}$ is the strike price and  $r>0$ is the interest rate. In this setting, it is well-known that there is a unique \textit{exercise boundary} $x^*(t) \le K^{Put}$, and one should exercise at the first $t$ when $X_t$ drops below this boundary. Finding the exercise boundaries $\{x^*(t)\}$ reduces to solving a sequence of SRFPs, that is, pricing the Bermudan Put is equivalent to finding the solutions $x^*(t)$ of the equation $h(t,x)=0$ for $t=\tilde{T}-\Delta t,\tilde{T}-2\Delta t, \ldots, 0$, where $h(t,x):=  V(t,x) - H(t,x)$ is the \textit{timing value}.

 The \cite{longstaff2001valuing} method (LSM) recursively builds noisy simulators for $h(t,x)$ by generating forward paths $x_{t:\tilde{T}}$ of the state process $(X_t)$ and computing corresponding path-wise stopping times $\tau \equiv \tau(t+\Delta t,x_{t:\tilde{T}}) = \min\{ s > t: x_s \le x^*(s) \} \wedge \tilde{T}$. Namely, the pathwise difference $Z^{LSM}_{t}(x_t) := H({\tau},x_{\tau}) - H(t,x_{t})$ between future and immediate reward over the given trajectory $x_{t:\tilde{T}}$ satisfies $\mathbb{E}[Z_{t}(x_{t})] = h(x_t;t)$, matching the structure of the oracle \eqref{eq:z-oracle}. The random component $\epsilon(t,x)$ arises intrinsically from the randomness in the $X$-trajectory.

  We implement the spatial G-PBA for the Bermudan Put oracle using $K^{Put} = 40, r=0.06, \sigma = 0.25, \tilde{T}=1$ and $\Delta t = 0.04$, restricting the root-finding to the ``in-the-money'' domain  $x\in (25,40)$. Following the discussion in Section 5 in~\cite{rodriguez2017generalized}, to conform to the assumption of a symmetric noise distribution, we employ pre-averaging that considers the sign of an average of $R>1$ oracle evaluations:
\begin{equation}
\label{eq:PreAveragedOSOracle}
\bar{Y}^{LSM}_{R}(x):= \sign \bar{Z}_{R}(x), \quad \bar{Z}_{R}(x) := \frac{1}{ R}\sum_{r=1}^{R}Z^{LSM}_{r}(x).
\end{equation}
The role of pre-averaging is to alleviate statistical anomalies of $\epsilon(\cdot)$ via the Central Limit Theorem; the resulting oracle accuracy for this case-study is: $p_{R}^{LSM}(x):= \mathbb{P}(\bar{Y}^{LSM}_{R}(x) = \sign\{x^{*}-x\}).$ Below we continue to record the clock-time $T_n$ based on  underlying oracle evaluations (rather than the pre-averaged ones).

Due to the non-standard noise distribution and very low signal-to-noise ratio, this is a difficult root-finding problem; in particular since we keep the simulation budget to $T=20,000$.
\textcolor{black}{We implement the simulator~\eqref{eq:PreAveragedOSOracle} with $R = 25$,
which is roughly the minimal level of pre-averaging required to alleviate the skew of $Z^{LSM}$~\citep{rodriguez2017generalized}, and a batch size of $a_n\equiv 500$. Thus, the \textit{effective number of replicates} at each location is $\tilde{a} := a/R =20$. All surrogates are initialized with $\hat{\varphi}_{N_{0}}$  based on $T_{0}{=}0.25{\times}T{=}5000$ function evaluations using $N_{0} := T_{0}/a = 10$ design sites over the interval $(25,40)$.
The adaptive replication scheme~\eqref{eq:GP-AdaptiveBatching} is implemented with ${a}_{n+1}^{\nu} = \min\{\hat{a}_{n+1}^{\nu},a/R\},$ where $a_{0}^{\nu}:=1$ and $\nu_{n}^{Put}:= 0.5/n$, so that the maximum number of oracle evaluations is at most $a=500$ per querying location.}

\subsection{Results}

\begin{table}[ht]
	\centering
	\caption{Spatial G-PBA Performance in the Bermudan Put SRFP with simulation budget $T=20,000$. All metrics are averages across $MC=100$ macro-replications of the algorithms.
		\label{tab:mc-osp}
	}
	\begin{tabular}{c|c|c|c|c}
		\hline
		\multirow{ 1 }{*}{$\eta$} & \multirow{1}{*}{$\hat{p}$} & \multicolumn{1}{c|}{$\hat{r}(f_{T})$  }& \multicolumn{1}{c|}{$\hat{l}_{0.95}(f_{T})$ }& \multicolumn{1}{c}{$\hat{c}_{0.95}(f_{T})$ (in \%)}   \\
		\hline
		\multirow{ 4 }{*}{ sIDS } & \multirow{ 1 }{*}{ B-GP } & 0.3210 & 0.8903 & 69.00   \\
		& \multirow{ 1 }{*}{ KLR } & 0.3598 &  0.6351 &  53.33 \\
		& \multirow{ 1 }{*}{ SLR } & 0.3158 & 0.9878 & 77.00   \\
		& \multirow{ 1 }{*}{ LR } & 0.2753 & 1.0687 & 88.50   \\
		\cline{1-5}
		\multirow{ 4 }{*}{ sRQS } & \multirow{ 1 }{*}{ B-GP } & 0.2988 & 1.4064 & 86.00   \\
		& \multirow{ 1 }{*}{ KLR } & 0.3121 & 0.8209 & 62.00   \\
		& \multirow{ 1 }{*}{ SLR } & 0.3180 & 1.2005 & 74.50  \\
		& \multirow{ 1 }{*}{ LR } & 0.2913 & 1.4039 & 90.50   \\
		\cline{1-5}
		Ada-sIDS  & \multirow{ 2 }{*}{ B-GP }  & 0.2225 & 0.6944 & 80.00  \\
		Ada-sRQS  &  & 0.3011 & 1.1121 & 76.47  \\
		\hline
	\end{tabular}
\end{table}

 Table~\ref{tab:mc-osp} shows the average residuals, length of CI, and coverage probability of the spatial G-PBA schemes compared against the baseline root location $\hat{x}^{*}(t) \simeq 35.1249$ (here $t=0.6$) found in~\cite{rodriguez2017generalized}. This time, adaptive replication with the one-step sIDS policy~\eqref{eq:OneStepIDS} is the best-performing scheme. One reason could be that it allows for more sampling locations (median number of sampling locations was $\median(N_T) = 55$, as apposed to 40 for the fixed $a_n$ schemes). Among the rest, sIDS  policy coupled with the polynomial logistic regression model (LR) also performs very well, consistent with our findings in Section~\ref{sec:NumericExamples}. Relative to the non-spatial PBA in~\cite{rodriguez2017generalized} two important improvements are noted: (i) much better coverage probabilities, indicating the gains in learning $p(\cdot)$ and hence maintaining a reliable knowledge state; (ii) residuals below 0.25 while they used to be about 0.35.

\section{Conclusion}
\label{sec:Conclusions}

{We have developed a family of numerical schemes that extend generalized probabilistic bisection~\citep{rodriguez2017generalized} by modeling the unknown oracle accuracy $p(\cdot)$  through a spatial
surrogate based on non-parametric binomial regression. The spatial structure yields two key benefits:
(I) given the surrogate, the IDS  criterion $\cI$ can be predicted for any $x$, allowing direct optimization of next querying site selection like in standard PBA; (II) employing a GP surrogate quantifies the predictive uncertainty of additional samples and hence allows for adaptive batching schemes. Adaptive replication allows to automatically fine-tune exploration by reducing replication amounts in regions where $p(\cdot)$ is already learned well.  Our numeric experiments confirm the advantages of Spatial G-PBA relative to the original proposals in \cite{rodriguez2017generalized} with the new algorithm inducing more accurate root estimates and better quantifying the posterior uncertainty about $x^*$.}

Looking ahead, one motivation for considering PBA in the context of SRFP is its Bayesian flavor that allows in particular to apply informative priors $f_0$ as a way to warm-start the root search. This offers one way to lift PBA, which is intrinsically limited to a one-dimensional setting, to higher dimensions. The analogue of SRFP in two-dimensions is noisy (zero-)contour-finding, which can be viewed as a collection of root-finding problems in the first coordinate $x_1$, indexed by the second coordinate $x_2$. Assuming the zero-contour is smooth, one may then try to solve for a few $x^*(x_2)$ and then ``connect the dots'' through interpolation (or a further surrogate model). Such searches can be made efficient with G-PBA by using $f_N( \cdot ; x_2)$ as a basis for an informative prior $f_0(\cdot; x_2')$ at a new $x_2'$. We leave such investigations to future research.

\section*{Acknowledgments}
\textcolor{black}{Rodriguez is partially supported by the National Science and Technology Council of Mexico (CONACYT) and University of California Institute for Mexico and the United States (UCMEXUS) under grant CONACYT-216011. Ludkovski is partially supported by NSF DMS-1521743. We are also grateful to the UCSB Center for Scientific Computing from the CNSI and MRL: an NSF MRSEC (DMR-1720256).}

\appendix
\begin{appendices}

\section{Binomial GPs and Laplace Approximation}
\label{appendix:LaplaceApproximation}

\textbf{Binomial log-likelihood Gradient and Hessian.}
%
We use the Bernoulli link function  $\Theta(\varphi) = (1+e^{-\varphi})^{-1}$ which implies that conditional on $\varphi_{i}= \varphi(x_i)$, the number of positive responses $B_{i}:=\sum_{j=1}^{a_{i}}1_{\{Z_{j}>0\}}$ follows a binomial distribution $B_{i}{\sim}\mathsf{Bin}(a_{i},\Theta(\varphi_{i}))$ with log-likelihood function (in the latent $\varphi_{1:n}$):
\begin{align*}
l(\varphi_{1:n}) \equiv \log p(B_{1:n}|\varphi_{1:n},a_{1:n})
&= \sum_{j=1}^{n} \left\{\log {a_{i} \choose B_{i}} + B_{i}\log \theta(\varphi_{i}) + (a_{i} - B_{i})\log[1-\theta(\varphi_{i})]\right\}.
\end{align*}
Since $\Theta^{\prime}(\varphi):= \Theta(\varphi)[1-\Theta(\varphi)]$, the corresponding gradient vector
 $\bm{u}_{n}(\varphi_{1:n}):= \nabla l(\varphi_{1:n})$ is given by
\begin{align}
\partial_{\varphi_i} l( \varphi_{1:n}) = u_{i} &=B_{i} \frac{\Theta^{\prime}(\varphi_{i})}{\Theta(\varphi_{i})} - (a_{i}-B_{i})\frac{\Theta^{\prime}(\varphi_{i})}{1-\Theta(\varphi_{i})} =B_{i}[1-\Theta(\varphi_{i})] + (B_{i}-a_{i})\Theta(\varphi_{i}) \nonumber \\
\label{eq:GradientBinomialLikelihood}
&=B_{i} - a_{i}\theta(\varphi_{i}),\quad i=1,\ldots,n,
\end{align}	
which is a function of $\varphi_i$ only. 
Differentiating $\bm{u}_n$ again therefore yields the diagonal $n\times n$ Hessian matrix $\bm{W}_{n}(\varphi_{1:n})=-\Delta \log p(B_{1:n}|\varphi_{1:n},a_{1:n})$ as specified in \eqref{eq:HessianBinomialLikelihood}.


\textbf{Normal Approximation to the Joint Posterior Distribution}. 
By Bayes' rule the posterior $p(\varphi_{1:n}|\mathcal{D}_{n})$ is proportional to the Binomial likelihood $p(B_{1:n}|\varphi_{1:n},a_{1:n},x_{1:n})$ times the zero-mean GP prior $p(\varphi_{1:n}|x_{1:n})$. Taking the $\log$ of the unnormalized joint posterior we obtain
\begin{align}\notag
\mathscr{L}(\varphi_{1:n}) &\propto  \log p(B_{1:n}|\varphi_{1:n},a_{1:n},x_{1:n}) + \log p(\varphi_{1:n}|x_{1:n})\\
&:= \log p(B_{1:n}|\varphi_{1:n},a_{1:n},x_{1:n})  -\frac{1}{2}\varphi_{1:n}^{T}\bm{K}_{n}^{-1}\varphi_{1:n} - \frac{1}{2}\log|\bm{K}_{n}| - \frac{n}{2} \log 2 \pi.
\end{align}
Denote by $
\hat{\bm{\varphi}}_{n}:= \argmax_{\bm{\varphi}_{n}} \mathscr{L}(\bm{\varphi}_{n}) = \argmax_{\bm{\varphi}_{n}} p(\bm{\varphi}_{n}|\mathcal{D}_{n}).$ Expanding $\mathscr{L}(\cdot)$ around $\hat{\bm{\varphi}}_{n}$ gives
$\mathscr{L}(\bm{\varphi}_{n})= \mathscr{L}(\bm{\hat{\varphi}}_{n}) + \frac{1}{2}( \bm{\varphi}_{n} - \bm{\hat{\varphi}}_{n} )^{T}[\Delta \mathscr{L}(\bm{\hat{\varphi}}_{n})](\bm{\varphi}_{n} - \bm{\hat{\varphi}}_{n}) + \cdots$; where the linear term in the expansion is zero because the log-posterior density has zero derivative at its mode. As discussed in \cite{gelman2014bayesian}, the remainder terms of higher order fade in importance relative to the quadratic term when $\bm{\varphi}_{n}$ is close to $\bm{\hat{\varphi}}_{n}$ and the sample size  $n$  is large.
Taking first and second partial derivatives of $\mathscr{L}(\varphi_{1:n})$ with respect to $\varphi_{1:n}$ and combining with \eqref{eq:GradientBinomialLikelihood}-\eqref{eq:HessianBinomialLikelihood} we obtain:
\begin{align}
\nabla \mathscr{L}(\varphi_{1:n}) &= \bm{u}_{n}(\varphi_{1:n})- \bm{K}_{n}^{-1}\varphi_{1:n},\\
\Delta \mathscr{L}(\varphi_{1:n}) &= -\bm{W}_{n}(\varphi_{1:n}) - \bm{K}_{n}^{-1};
\end{align}
At the mode of $\mathscr{L}(\varphi_{1:n})$ we have
\begin{equation}
\label{eq:IRLS0}
\nabla \mathscr{L}(\bm{\hat{\varphi}}_{n}) = \bm{0} \quad \Rightarrow \qquad \bm{\hat{\varphi}}_{n} = \bm{K}_{n}\bm{u}_{n}(\bm{\hat{\varphi}}_{n})
\end{equation}
as a self-consistent nonlinear equation determining $\bm{\hat{\varphi}}_{n}$. In order to solve~\eqref{eq:IRLS0},
 an iterative procedure based on classical Newton-Raphson search is employed.

Next, the Hessian of the score function $\Delta \mathscr{L}( \varphi_n)$ is  interpreted as the inverse covariance matrix, leading to the Gaussian approximation
$q(\cdot|\mathcal{D}_{n})$ to the true posterior $p(\cdot|\mathcal{D}_{n})$
 \begin{equation}
 \label{eq:JointPosteriorLaplace}
 q(\cdot|\mathcal{D}_{n}) := \mathsf{N}(\cdot;\hat{\bm{\varphi}}_{n}, \bm{\Sigma}_{n}) \qquad \text{where} \quad \bm{\Sigma}_{n}\equiv (\bm{K}_{n}^{-1} + \bm{W}_{n}(\hat{\bm{\varphi}}_{n}))^{-1}.
 \end{equation}

%

\textbf{Predictive distribution.} The approximated predictive pdf $\varphi_*(x)$ at a test location $x\in (0,1)$ is Gaussian
$\varphi_*(x)\sim \mathsf{N}(m_{n}(x;\hat{\bm{\varphi}}_{n}),s_{n}^{2}(x;\hat{\bm{\varphi}}_{n}))$ with the mean $m_{n}(x;\hat{\bm{\varphi}}_{n})$ given by:
\begin{align} \notag
m_{n}(x;\hat{\bm{\varphi}}_{n}) &:= \int \mathbb{E}[\varphi(x)|\tilde{\varphi}_{1:n}] p(\tilde{\varphi}_{1:n}|\mathcal{D}_{n}) d\tilde{\varphi}_{1:n} \\ \notag
& = \bm{\kappa}_{n}^{T} \bm{K}_{n}^{-1} \int  \tilde{\varphi}_{1:n} p(\tilde{\varphi}_{1:n}|\mathcal{D}_{n}) d\tilde{\varphi}_{1:n} \\
& = \bm{\kappa}_{n}^{T} \bm{K}_{n}^{-1} \mathbb{E}[\varphi_{1:n}|\mathcal{D}_{n}] \simeq \bm{\kappa}_{n}^{T} \bm{K}_{n}^{-1} \hat{\bm{\varphi}}_{n};
\end{align}
where $\bm{\kappa}_{n}^{T}\equiv (\kappa(x_{1},x),\ldots,\kappa(x_{n},x))$, matching \eqref{eq:PosteriorPredictiveMean-n}. Likewise, the approximated predictive variance, $s_{n}(x;\hat{\bm{\varphi}}_{n})\equiv\mathbb{V}ar(\varphi(x)|\mathcal{D}_{n},\hat{\bm{\varphi}}_{n},x)$, is given by (cf.~\eqref{eq:PosteriorPredictiveVariance-n}):
\begin{align*}
s_{n}(x;\hat{\bm{\varphi}}_{n}) &:= \mathbb{E}[\mathbb{V}ar(\varphi(x)|\varphi_{1:n},x_{1:n})|\mathcal{D}_{n}] + \mathbb{V}ar(\mathbb{E}[\varphi(x)|\varphi_{1:n},x_{1:n},x]|\mathcal{D}_{n})\\
&= \mathbb{E}[\tau^{2}-\bm{\kappa}_{n}^{T}\bm{K}_{n}^{-1}\bm{\kappa}_{n} |\mathcal{D}_{n}] + \mathbb{V}ar(\bm{\kappa}_{n}^{T}\bm{\kappa}_{n}^{-1}\varphi_{1:n}|\mathcal{D}_{n}) \\
&= \tau^{2}-\bm{\kappa}_{n}^{T}\bm{K}_{n}^{-1}\bm{\kappa}_{n} + \bm{\kappa}_{n}^{T}\bm{K}_{n}^{-1}\mathbb{V}ar(\varphi_{1:n}|\mathcal{D}_{n})\bm{K}_{n}^{-1}\bm{\kappa}_{n}\\
&\simeq  \tau^{2}-\bm{\kappa}_{n}^{T}\bm{K}_{n}^{-1}\bm{\kappa}_{n} + \bm{\kappa}_{n}^{T}\bm{K}_{n}^{-1}(\bm{K}_{n}^{-1} + \bm{W}_{n}(\hat{\bm{\varphi}}_{n}))^{-1}\bm{K}_{n}^{-1}\bm{\kappa}_{n}\\
&=\tau^{2}- \bm{\kappa}_{n}^{T}(\bm{K}_{n} + \bm{W}_{n}(\hat{\bm{\varphi}}_{n})^{-1})^{-1}\bm{\kappa}_{n},
\end{align*}
where the last line is true via the matrix inversion lemma applied to $(\bm{K}_{n} + \bm{W}_{n}(\hat{\bm{\varphi}}_{n})^{-1})^{-1}$:
\begin{align*}
\bm{\kappa}_{n}^{T}(\bm{K}_{n} + \bm{W}_{n}(\hat{\bm{\varphi}}_{n})^{-1})^{-1}\bm{\kappa}_{n} &= \bm{\kappa}_{n}^{T}\{
\bm{K}_{n}^{-1} - \bm{K}_{n}^{-1}(\bm{K}_{n}^{-1} + \bm{W}_{n}(\hat{\bm{\varphi}}_{n}))^{-1}\bm{K}_{n}^{-1}
\}\bm{\kappa}_{n}\\
&=  \bm{\kappa}_{n}^{T}\bm{K}_{n}^{-1} - \bm{\kappa}_{n}^{T}(x)\bm{K}_{n}^{-1}(\bm{K}_{n}^{-1} + \bm{W}_{n}(\hat{\bm{\varphi}}_{n}))^{-1}\bm{K}_{n}^{-1}\bm{\kappa}_{n}.
\end{align*} 

\section{Predictive Variance Decomposition for Binomial GPs under Laplace Approximation (Theorem~\ref{thm:PosteriorPredictiveVariance})}
\begin{proof}
	\label{appendix:ProofPosteriorPredictiveVariance}
Set 
$\hat{\bm{\varphi}}_{n+1}\equiv (\hat{\varphi}_{1;n+1},\ldots,\hat{\varphi}_{n+1;n+1})
$
to be the $(n+1)$-dimensional estimated mode based on training data $\mathcal{D}_{n+1}$ obtained at locations $x_{1:n+1}$; and let
\[
\hat{\bm{W}}_{n+1;n+1}:= \mbox{diag}\{\hat{w}_{1;n+1},\ldots,\hat{w}_{n+1;n+1}\},\quad \hat{w}_{i}\equiv  w_{i}(\hat{\varphi}_{i;n+1})
\]
be the Hessian matrix~\eqref{eq:HessianBinomialLikelihood} evaluated at $\hat{\bm{\varphi}}_{n+1}$. Then, we have that the $(n+1) \times (n+1)$ covariance matrix $\bm{\Sigma}_{n+1}\equiv (\bm{K}_{n+1} + \hat{\bm{W}}_{n+1})^{-1})$ of the joint approximated posterior~\eqref{eq:JointPosteriorLaplace} can be partitioned as:
	\begin{equation}
	\label{eq:SigmaPartitioned}
	\bm{\Sigma}_{n+1} = \left(
	\begin{array}{cc}
	\bm{\Sigma}_{1:n;n+1} & \bm{\kappa}_{n}^{*}\\
	(\bm{\kappa}_{n}^{*})^{T} & \tau^{2} + \hat{w}_{n+1;n+1}^{-1}
	\end{array}
	\right),
	\end{equation}
	where $\bm{\kappa}_{n}^{*}:=(\kappa(x_{1},x_{n+1}),\ldots,\kappa(x_{n},x_{n+1}))^{T}$ is a $n \times 1$ column vector of covariances of $\varphi_{1:n}$ against $\varphi_{n+1}$, and $\tau^{2} = \kappa(x_{n+1},x_{n+1})$ is scalar.
	 Applying the Matrix Inversion Theorem~\cite{henderson1981deriving}, the inverse of~\eqref{eq:SigmaPartitioned} is:
	\begin{align*}
	\bm{\Sigma}_{n+1}^{-1} &= \left(
	\begin{array}{cc}
	\bm{\Sigma}_{1:n;n+1}^{-1} + (\bm{\Sigma}_{1:n;n+1}^{-1}\bm{\kappa}_{n}^{*})(\bm{\Sigma}_{1:n;n+1}^{-1}\bm{\kappa}_{n}^{*})^{T}a^{-1} & -\bm{\Sigma}_{1:n;n+1}^{-1}\bm{\kappa}_{n}^{*}a^{-1}\\
	-(\bm{\Sigma}_{1:n;n+1}^{-1}\bm{\kappa}_{n}^{*})^{T}a^{-1} & a^{-1}
	\end{array}
	\right) \\
	&= \left(
	\begin{array}{cc}
	\bm{\Sigma}_{n;n+1}^{-1}  & 0\\
	0 & 0
	\end{array}
	\right) + a^{-1}\left(
	\begin{array}{cc}
	(\bm{\Sigma}_{1:n;n+1}^{-1}\bm{\kappa}_{n}^{*})(\bm{\Sigma}_{1:n;n+1}^{-1}\bm{\kappa}_{n}^{*})^{T} & -\bm{\Sigma}_{1:n;n+1}^{-1}\bm{\kappa}_{n}^{*}\\
	-(\bm{\Sigma}_{1:n;n+1}^{-1}\bm{\kappa}_{n}^{*})^{T} & 1
	\end{array}
	\right),\\
	\text{where the scalar is}\quad a &:= (\tau^{2} + \hat{w}_{n+1;n+1}^{-1})-(\bm{\kappa}_{n}^{*})^{T}\bm{\Sigma}_{1:n;n+1}^{-1}\bm{\kappa}_{n}^{*} \\
	&=  \hat{w}_{n+1;n+1}^{-1} + (\tau^{2}-(\bm{\kappa}_{n}^{*})^{T}\bm{\Sigma}_{1:n;n+1}^{-1}\bm{\kappa}_{n}^{*}) 
	=  \hat{w}_{n+1;n+1}^{-1} + s_{n}^{2}(x_{n+1};\hat{\bm{\varphi}}_{1:n,n+1}).
	\end{align*}
	
	Substituting the expression for $\bm{\Sigma}_{n+1}^{-1}$ obtained above in the predictive variance formula \eqref{eq:PosteriorPredictiveVariance-n}, we have that the posterior predictive variance  given the dataset $\mathcal{D}_{n+1}$ is:
	\begin{align*}
	s_{n+1}^{2}(x_{n+1};\hat{\bm{\varphi}}_{n+1}) &:= \tau^{2} - (\bm{\kappa}_{n}^{*})^{T} \bm{\Sigma}_{n+1}^{-1}\bm{\kappa}_{n}^{*}\\ 
	&= \tau^{2} -
	\bm{u}^{T}
	\left\{
	\left(
	\begin{array}{cc}
	\bm{\Sigma}_{1:n;n+1}^{-1}  & 0\\
	0 & 0
	\end{array}
	\right) + a^{-1}\left(
	\begin{array}{cc}
	\bm{v}\bm{v}^{T} & -\bm{v}\\
	-\bm{v}^{T} & 1
	\end{array}
	\right)
	\right\}
	\bm{u} \\
	&= \tau^{2} - \bm{u}^{T}\left(
	\begin{array}{cc}
	\bm{\Sigma}_{n;n+1}^{-1}  & 0\\
	0 & 0
	\end{array}
	\right)\bm{u} -  a^{-1}\bm{u}^{T}
	\left(
	\begin{array}{cc}
	\bm{v}\bm{v}^{T} & -\bm{v}\\
	-\bm{v}^{T} & 1
	\end{array}
	\right)
	\bm{u}\\
	&= \left[\tau^{2} - (\bm{\kappa}_{n}^{*})^{T}\bm{\Sigma}_{1:n;n+1}^{-1}\bm{\kappa}_{n}^{*}\right] - a^{-1}[b^{2} - b\tau^{2} - \tau^{2}b + \tau^{4}], \qquad b_{1\times1}\equiv (\bm{\kappa}_{n}^{*})^{T}\bm{v}\\
	&= s_{n}^{2}(x_{n+1}; ;\hat{\bm{\varphi}}_{n}) - a^{-1}(\tau^{2} - b)^{2};
	\end{align*}
	where we set $\bm{v}_{n\times 1}\equiv \bm{\Sigma}_{1:n;n+1}^{-1}\bm{\kappa}_{n}^{*}$ and let $\bm{u}_{(n+1)\times 1}\equiv (\bm{\kappa}_{n}^{*}\  \tau^{2})^{T}$ be the concatenation of the vector $\bm{\kappa}_{n}^{*}$ and the scalar $\tau^{2}$. Simplifying, we finally get:
	\begin{align*}
	s_{n+1}^{2}(x_{n+1};\hat{\bm{\varphi}}_{n+1})&= s_{n}^{2}(x_{n+1};\hat{\bm{\varphi}}_{1:n;n+1}) - \frac{1}{a}(\tau^{2} - (\bm{\kappa}_{n}^{*})^{T}\bm{\Sigma}_{1:n;n+1}^{-1}\bm{\kappa}_{n}^{*})^{2}\\
	&= s_{n}^{2}(x_{n+1};\hat{\bm{\varphi}}_{1:n;n+1}) - \frac{(s_{n}^{2}(x_{n+1};\hat{\bm{\varphi}}_{1:n;n+1}))^{2} }{\hat{w}_{n+1}^{-1} + s_{n}^{2}(x_{n+1};\hat{\bm{\varphi}}_{n;n+1})}\\
	&=\frac{s_{n}^{2}(x_{n+1};\hat{\bm{\varphi}}_{1:n;n+1})\hat{w}_{n+1;n+1}^{-1}}{\hat{w}_{n+1;n+1}^{-1} + s_{n}^{2}(x_{n+1};\hat{\bm{\varphi}}_{1:n;n+1})} \\
	&=\left(\frac{1}{s_{n}^{2}(x_{n+1};\hat{\bm{\varphi}}_{1:n;n+1})} + \frac{1}{\hat{w}_{n+1;n+1}^{-1}}\right)^{-1}.
	\end{align*}
Finally, we notice that $\hat{w}_{n+1;n+1}:= a_{n+1}\Theta(\hat{\varphi}_{n+1;n+1})\left(1-\Theta(\hat{\varphi}_{n+1;n+1}) \right)$
which leads to \eqref{eq:PosteriorPredictiveVariance-nplus1}.
\end{proof}
\end{appendices} 

\bibliography{pbsm-June20}{}

	\end{document}